\newtheorem{lem}{Lemma}
\newtheorem{ass}{Assumption}
\newtheorem{theorem}{Theorem}
\newtheorem{rem}{Remark}
\def\mb{\mathbf}
\def\mbb{\mathbb}
\def\mc{\mathcal}
\def\mb{\mathbf}
\def\mbb{\mathbb}
\def\mc{\mathcal}
\DeclareMathOperator*{\argmin}{argmin}
\journal{Engineering Applications of Artificial Intelligence}
\begin{document}

\begin{frontmatter}

\title{Machine Learning and CPU (Central Processing Unit) Scheduling Co-Optimization over a Network of Computing Centers
}

\author[Sem]{Mohammadreza Doostmohammadian}
\affiliation[Sem]{Mechatronics Group, Faculty of Mechanical Engineering, Semnan University, Semnan, Iran, and Center for International Scientific Studies and Collaborations, Tehran, Iran, doost@semnan.ac.ir.}

\author[kaz]{ Zulfiya R. Gabidullina}
\affiliation[kaz]{Institute of Computational Mathematics and Information Technologies, Kazan Federal University, Russia, Zulfiya.Gabidullina@kpfu.ru.}

\author[HR]{ Hamid R. Rabiee}
\affiliation[HR]{Computer Engineering Department, Sharif University of Technology, Tehran, Iran,
	rabiee@sharif.edu.}

\begin{abstract}
	In the rapidly evolving research on artificial intelligence (AI) the demand for fast, computationally efficient, and scalable solutions has increased in recent years. The problem of optimizing the computing resources for distributed machine learning (ML) and optimization is considered in this paper. Given a set of data distributed over a network of computing-nodes/servers, the idea is to optimally assign the CPU (central processing unit) usage while simultaneously training each computing node locally via its own share of data. This formulates the problem as a co-optimization setup to (i) optimize the data processing and (ii) optimally allocate the computing resources. The information-sharing network among the nodes might be time-varying, but with balanced weights to ensure consensus-type convergence of the algorithm. The algorithm is all-time feasible, which implies that the computing resource-demand balance constraint holds at all iterations of the proposed solution. Moreover, the solution allows addressing possible log-scale quantization over the information-sharing channels to exchange log-quantized data. For some example applications, distributed support-vector-machine (SVM) and regression are considered as the ML training models.
	Results from perturbation theory, along with Lyapunov stability and eigen-spectrum analysis, are used to prove the convergence towards the optimal case. As compared to existing CPU scheduling solutions, the proposed algorithm improves the cost optimality gap by more than $50\%$. 
\end{abstract}

%%Graphical abstract
\begin{graphicalabstract}
	\includegraphics{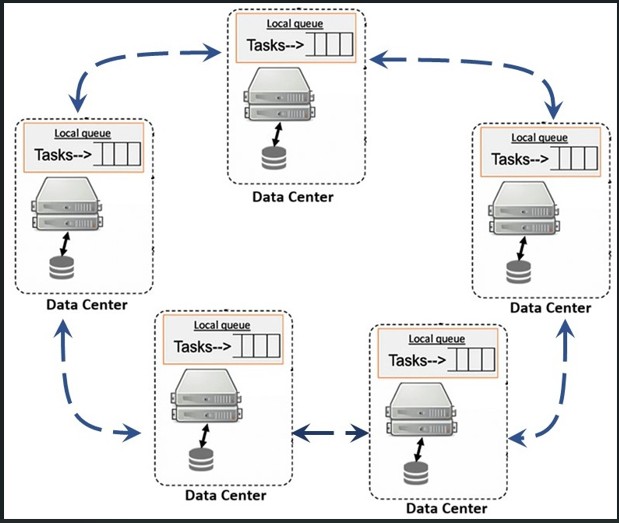}
\end{graphicalabstract}

%%Research highlights
\begin{highlights}
	\item Simultaneous optimization of data processing and computing resource allocation over a network of computing nodes.
	\item Preserving resource-demand constraint feasibility at all iterations
	\item Addressing log-scale quantization for limited networking traffic setups.
	\item Perturbation-based convergence analysis for time-varying data-sharing networks.
	\item Applicability in distributed support-vector machines and regression models.
\end{highlights}

\begin{keyword}
	Distributed optimization \sep networked data centres\sep consensus\sep computing resource scheduling
\end{keyword}

\end{frontmatter}

\section{Introduction} \label{sec_intro}
Data processing and distributed machine-learning (ML) over a network of computing centers has been recently motivated by advances in cloud/edge-computing \cite{ARANDA2022107826}, multi-agent learning systems \cite{zamani2016iterative,WANG2025109825}, Internet-of-Things (IoT) applications \cite{yosuf2020energy}, and cyber-physical-systems (CPS) \cite{CAPOTA2019204}. In this direction, \textit{distributed} and \textit{decentralized} algorithms are proposed to distribute the computational/processing loads among a group of computing nodes to advance the traditional centralized solutions in terms of scalability, robustness to single-node-of-failure, and reliability. Such algorithms allow for combining machine learning and data filtering models across a network of interconnected nodes/agents and cloud-based platforms \cite{MELONI2018156}, enabling parallel processing and accelerated decision-making \cite{di2020distributed,spl24}. In these setups, one main problem is \textit{how to efficiently/optimally schedule the computational resources among these computing nodes while performing optimization-based machine learning and data filtering tasks}. The synergy between CPU scheduling and learning from data offers promising solutions for optimizing computing performance in distributed ML setups.

Some related application-based works in the literature are discussed next. A relevant problem of interest is mixed-integer linear programming with applications to scheduling \cite{liu2021novel} and scalable task assignment \cite{bragin2018scalable}, which is further extended to asynchronous coordination via primal-dual formulation in \cite{bragin2020distributed}. Distributed task allocation to reach consensus on duty to capability ratio \cite{MiadCons} and facility location optimization for discrete coverage algorithms over a convex polygon \cite{MSC09} via swarm robotic networks are also based on distributed scheduling and optimization algorithms.  On the other hand, coordination of CPUs over networked data centres using both centralized algorithms \cite{grammenos2023cpu,kalyvianaki2009self} and quantized consensus-based distributed algorithms \cite{rikos2021optimal} is considered in the literature.
Distributed resource allocation and optimization algorithms with applications in energy system management are also discussed in the literature; e.g., see relevant works on convex quadratic relaxations of the AC power flow \cite{koutsoukis2016online}, distributed economic dispatch to coordinate generators and iteratively share local information to determine their outputs that minimizes total production cost while meeting demand and operational constraints \cite{ZHOLBARYSSOV201947}, distributed generator scheduling to cooperatively manage energy production across a grid network to optimize reliability, cost, and grid services while respecting local constraints \cite{scl2023,cheng2024distributed}, and distributed automatic generation control for power regulation tasks to achieve system-wide load-generation balance without centralized control \cite{scl,dsp}. Another concern in the literature is non-ideal data sharing due to quantization.
In this direction, a distributed subgradient method with adaptive (or dynamic) quantization is proposed in \cite{doan2020fast} while traditional static quantization with a fixed quantization level is considered for consensus \cite{rikos2022non}, learning \cite{wu2018error}, and optimization \cite{kajiyama2020linear}. One main challenge in the uniformly quantized setup is how to mitigate the quantization error. In this direction, one solution is to use \textit{log-scale} quantization instead. Another concern is the notion of constraint feasibility in distributed optimization. All-time feasibility is crucial to ensure that the (computing) resource-demand balance always holds. This is in contrast to primal-dual-based alternating direction method of multipliers (ADMM) formulations that reach resource-demand feasibility asymptotically \cite{jian2019distributed,falsone2023augmented,gong2024primal,cheng2024distributed}. What is missing in the existing literature is a unified framework to simultaneously address all-time feasible scheduling of the computing resources along with distributed learning and optimization while addressing log-scale quantization.

The main contributions of this work are as follows. Given a co-optimization problem for CPU scheduling and ML optimization, we provide a distributed algorithm to optimally allocate the computing resources while simultaneously solving the distributed optimization. This coordination algorithm is based on consensus mechanism and gradient descent tracked by introducing an auxiliary variable. We further address the logarithmic quantization of the data shared over the data-sharing network of computing-nodes/agents. Using perturbation theory and Lyapunov-based eigenspectrum analysis, we prove convergence to the optimal case in the presence of log-scale quantized information exchange. Further, the algorithm allows to consider change in the topology of the networked agents without violating the convergence of the solution. Moreover, we show that the algorithm is all-time feasible, implying that the (computing) resource-demand holds at all times along the solution. This implies that at any termination time of the algorithm, there is no violation in the balance between  CPU resources and computing demand. We verify the feasibility and optimal convergence by extensive simulations for different ML applications. Comparisons with the existing literature are also provided by the simulations.

\textit{Paper Organization:} Section~\ref{sec_prob} formulates the problem in mathematical form. Section~\ref{sec_alg} provides the main algorithm to solve the problem. Section~\ref{sec_analysis} provides the analysis of convergence, optimality, and feasibility. Section~\ref{sec_sim} presents the simulation results. Finally, Section~\ref{sec_con} concludes the paper.

\textit{Notations:} $I_m$ is the identity matrix of size $m$. $\mb{1}_n$ (or $\mb{0}_n$)  is the column vector of all ones (or zeros) of size $n$. Similarly, $\mb{0}_{n \times n}$ denotes zero square matrix of size $n$. Operator $\otimes$ denotes the Kronecker product. Operator "$;$" in vectors implies column concatenation.

\section{Problem Formulation}  \label{sec_prob}
The optimization problem in this paper is two-fold: (i) optimizing the allocation of CPU resources and (ii) optimizing the loss function associated with the machine learning application.
\begin{align} \label{eq_opt}
\min_{\mb{x},\mb{y}}
~ & \sum_{i=1}^{n} f_i(\mb{x}_i) + g_i(\mb{y}_i)\\ \nonumber
& \text{s.t.} ~~ \mb{y}_1 = \mb{y}_2 = \dots = \mb{y}_n \\\nonumber
&  \sum_{i=1}^{n} \mb{x}_i = b \\\nonumber
& \mb{x}_i \in \Xi_i
\end{align}
with $\mb{y}_i \in \mathbb{R}^m$ as the ML parameter states, $g_i: \mathbb{R}^m \rightarrow \mathbb{R}$ as the ML loss function, $\mb{x}_i \in \mathbb{R}$ as the assigned CPU resources, $f_i: \mathbb{R} \rightarrow \mathbb{R}$ as the cost of allocated CPU to node $i$, and $\Xi_i \subseteq \mathbb{R}$ representing a range of admissible values for CPU states, all at computing node $i$. In general, $\Xi_i = \{\mb{x} \in \mathbb{R}: h_i^j(\mb{x}) \leq 0, j=1,\dots,p_i \}$
with $h_i^j:\mathbb{R} \rightarrow \mathbb{R}$ as convex and smooth functions on $\Xi_i$.  One simple example of the latter is the so-called box constraints $\mb{x}_i \in [m_i~ M_i]$. Parameter $b$ denotes the overall available CPU resources to all nodes. The global state vectors are defined as $\mb{x} := [\mb{x}_1;\dots;\mb{x}_n]$ and $\mb{y} := [\mb{y}_1;\dots;\mb{y}_n]$.

\begin{ass} \label{ass_cost}
	Local CPU cost function $f_i(\cdot)$ is strictly convex and smooth. The ML loss function $g_i(\cdot)$ is (possibly) non-convex and smooth, satisfying:
	\begin{align} \label{eq_H}
		(\mb{1}_n \otimes I_m)^\top H (\mb{1}_n \otimes I_m) \succ 0.
	\end{align}
	with $H:=\mbox{diag}[\nabla^2 g_i(\mb{y}_i)] \preceq L I_{mn}$.  
\end{ass}

\begin{rem} \label{rem_objective}
The CPU cost is typically modelled as a quadratic function as described in \cite{grammenos2023cpu,kalyvianaki2009self,rikos2021optimal}, which satisfies Assumption~\ref{ass_cost}. 
The examples for ML loss functions include Hinge loss for data classification via support-vector-machine (SVM) \cite{ddsvm}, linear and logistic regression costs \cite{xin2020decentralized,sundhar2012new}, and collaborative least square loss function for inference and estimation \cite{dimakis2010gossip,kar2008distributed,zhang2023top}. For most existing smooth cost/loss functions, Assumption~\ref{ass_cost} holds.
\end{rem}

The way data is distributed among the computing nodes relates the two objective functions $f_i(\cdot)$ and $g_i(\cdot)$. For example, the quantity/quality of the data accessible to each computing node defines both the share of assigned CPU resources $\mb{x}_i$ and the local ML parameters $\mb{y}_i$ associated with the data. The two optimization algorithms are genuinely coupled via the constraints, specifically the constraint $\mb{x}_i \in \Xi_i$, where the local constraint set $\Xi_i$ depends on the number of the data points (for ML objective) associated with the computing node $i$ (for scheduling objective) and ties the two problems. These constraints are typically addressed by the so-called box constraints in the literature \cite{nesterov1998introductory,bertsekas1975necessary}. From this point onward in the paper, we consider a specific objective function and provide the solution. However, the solution holds for the general model~\eqref{eq_opt} and can be easily extended to similar objective functions satisfying Assumption~\ref{ass_cost}.

\subsection{CPU Scheduling Model}
The CPU scheduling model in this subsection follows from \cite{grammenos2023cpu,kalyvianaki2009self,rikos2021optimal}.
Assume that the networked data centres consist of a set of (computing) nodes $\mathcal{V}$ of size $n$, where each node $i \in \mathcal{V}$ can also function as a resource scheduler, a common practice in modern data centres. Denote by $\mathcal{J}$ the set of all jobs to be scheduled. Each job $b_{j} \in \mathcal{J}$ requires $b_{j}$ CPU cycles to be performed, a quantity known prior to optimization.
At each computing node $i$, the total workload due to arriving jobs is $l_i$. The optimization period, $T_{h}$, represents the time during which the current set of jobs is performed before the next reallocation. Each node’s CPU capacity during this period is $\kappa_i^{\max} := c_i T_h$, with $c_i$ as the aggregate clock rate of all processing cores at node $i$, measured in cycles per second. The CPU availability at optimization step $k$ is $\kappa_i^{\mathrm{avail}}[k] = \kappa_i^{\max} - u_{i}[k]$, where $u_{i}[k]$ represents the cycles already allocated to running tasks.
Let $b[k] = \sum_{b_j[k] \in \mathcal{J}[k]} b_{j}[k]$ at step $k$, and $\kappa^{\mathrm{avail}}[k] = \sum_{i\in \mathcal{V}} \kappa_i^{\mathrm{avail}}[k]$ denote the total available capacity. The time horizon $T_{h}$ at step $k$ is chosen such that $b[k] \leq \kappa^{\mathrm{avail}}[k]$, ensuring that the total demand does not exceed the available resources. Each node calculates the optimal solution at every optimization step $k$ by executing a distributed algorithm that updates CPU utilization, accounting for (possibly log-quantized) information exchange and workloads. The algorithm ensures that each node balances its CPU utilization during task execution while meeting feasibility constraints. This \textit{balancing strategy} requires each node $i$ to calculate the optimal workload $\mb{x}_i^*[k]$ such that $\forall i,j \in \mathcal{V}$:
\begin{align} \label{eq_cond}
	\frac{\mb{x}_i^*[k] + u_{i}[k]}{\kappa_i^{\max}} = \frac{\mb{x}_j^*[k] + u_{j}[k]}{\kappa_j^{\max}} = \frac{b[k] + u_{\mathrm{tot}}[k]}{\kappa^{\max}},
\end{align}
where $\kappa^{\max} = \sum_{i\in \mathcal{V}} \kappa_i^{\max}$ and $u_{\mathrm{tot}}[k] = \sum_{i\in \mathcal{V}} u_{i}[k]$. This condition \eqref{eq_cond} implies that the task allocation strategy allows every node to balance its CPU utilization during the execution of the tasks, i.e., to coordinate the given tasks such that each node utilizes the same percentage of its own capacity while meeting the constraint feasibility.
For simplicity, we drop the index $k$ from this point onward in this section. Each node is associated with a scalar quadratic local cost function $f_i : \mathbb{R} \to \mathbb{R}$:
\begin{equation}\label{eq_local_cost_functions}
	f_i(\varkappa) = \frac{1}{2} \alpha_i (\varkappa - b_i)^2, 
\end{equation}
where $\alpha_i > 0$, $b_i \in \mathbb{R}$ is the positive demand at node $i$, and $\mb{\varkappa}=[\varkappa_1;\dots;\varkappa_{n}]$ is the global optimization parameter. The goal is to minimize the global cost function (sum of local cost functions).
Let the optimizer be
\begin{align}\label{eq_opt:1}
	\mb{\varkappa}^* = \arg\min_{\mb{\varkappa} \in \mathcal{Z}} \sum_{i \in \mathcal{V}} f_i(\varkappa_i),
\end{align}
where $\mathcal{Z}$ is the set of feasible values for $\mb{\varkappa}$, e.g., $\mathcal{Z}$ may represent the box constraints in the form $\underline{m}_i \leq \varkappa_i \leq \overline{M}_i$.
The closed-form solution for \eqref{eq_opt:1} is:
\begin{align}\label{eq_x}
	\mb{\varkappa}^* = \frac{\sum_{i \in \mathcal{V}} \alpha_i b_{i}}{\sum_{i \in \mathcal{V}} \alpha_i}.
\end{align}
From \cite{rikos2021optimal}, to find the optimal workload according to \eqref{eq_cond}, we need the solution of \eqref{eq_opt:1} to be 
\begin{align}\label{eq:closedform1}
	\mb{\varkappa}^* =  \frac{\sum_{v_{i} \in \mathcal{V}} \kappa_i^{\max} \frac{b_{i}+u_{i}}{\kappa_i^{\max}}}{\sum_{i \in \mathcal{V}} \kappa_i^{\max}} = \frac{\rho + u_{\mathrm{tot}}}{\kappa^{\max}}.
\end{align}
From \eqref{eq:closedform1}, we need to modify the local cost model \eqref{eq_local_cost_functions} as
\begin{align}\label{eq:fiz}
	f_i(\varkappa) = \frac{1}{2}\kappa_i^{\max} \left(\varkappa- \frac{b_{i}+u_{i}}{\kappa_i^{\max}} \right)^2.
\end{align}
This implies that every node $i$ finds its proportion of workload, and from this proportion valueit can find the optimal workload $\mb{x}_i^*$ to receive, which is 
\begin{align}\label{eq:optimal_workload}
	\mb{x}_i^*  = \frac{b + u_{\mathrm{tot}}}{\kappa^{\max}} \kappa_i^{\max} - u_{i}.
\end{align}
However, it should be noted that the allocated workload by \eqref{eq:optimal_workload} gives the optimal allocation subject to constraint \eqref{eq_cond}. From \cite{rikos2021optimal}, a more general improved cost model in the following form can be considered 
\begin{equation}\label{local_cost_functions2}
	f_i(\varkappa) = \dfrac{1}{2} \alpha_i (\varkappa_i - b_i)^2, 
\end{equation} 
where $\varkappa_i \neq \varkappa_j$, in general. Note the subtle difference here as the factors $\varkappa_i$ in \eqref{local_cost_functions2} could be unequal (compared to the same $\varkappa$ in formulation \eqref{eq_local_cost_functions}). Substituting $\mb{x}_i$ from \eqref{eq_cond}, 
\begin{equation}\label{local_cost_w}
	f_i(\mb{x}_i) = \dfrac{1}{2\kappa_i^{\max}}  (\mb{x}_i - b_i)^2,
\end{equation} 
This convex formulation gives lower cost by replacing the balancing constraint $\frac{\mb{x}_i + \rho_i}{\kappa_i^{\max}} = \frac{\mb{x}_j + b_j}{\kappa_j^{\max}}$ (or $\varkappa_i = \varkappa_j$) with a more general sum-preserving constraint $\sum_{i=1}^n \mb{x}_i = \sum_{i=1}^n \mb{x}_i^* = b$. This implies assigning the same workload as given by \eqref{eq:closedform1}. Then, the modified version of \eqref{eq:fiz} is 
\begin{align}\label{eq:fiz2}
	f_i(\mb{x}_i) = \frac{1}{2\kappa_i^{\max}} (\mb{x}_i - b_i)^2 ~\text{s.t.} ~ \sum_{i=1}^n \mb{x}_i  = b.
\end{align}
To avoid exceeding server capacities and increase mean response times, we add box constraints on load-to-capacity ratios, keeping them below $70\%-80\%$ of their capacity. This defines the local constraint $\Xi_i$.
The scheduling approach ensures efficient CPU resource allocation while maintaining balanced and feasible workloads across all servers, as discussed later in Section~\ref{sec_analysis}.

\begin{rem}
	The existing works for CPU allocation typically consider \textit{quadratic} cost models, see \cite{grammenos2023cpu,rikos2021optimal} for example. However, following Assumption~\ref{ass_cost}, this work considers \textit{strictly convex} cost models for CPU scheduling that could be non-quadratic in general. This is a more relaxed condition and allows for adding non-quadratic penalty terms (or barrier functions) to the objective function to address convex constraints or box constraints (this is discussed more in Section~\ref{sec_alg_lin}).
\end{rem}

\begin{rem} \label{rem_cpu}
The CPU balancing model in \cite{rikos2021optimal} assumes that all CPU states reach agreement on the assigned computing loads. This balancing assumption allows to apply consensus-based algorithms directly to reach the optimal value. On the other hand, in this work, we assume sum-preserving constraint $\sum_{i=1}^n \mb{x}_i = b$ instead. This is a less restrictive assumption on the allocated computing resources. In general, it is easy to show that the
allocation cost subject to the balancing constraint in \cite{rikos2021optimal} is always more than (or
equal to) the allocation cost subject to sum-preserving constraint. This is because the solution by \cite{rikos2021optimal} assigns the resources as $\mb{x}_i=\mb{x}_j=\tfrac{b}{n}$ to reach consensus for all $i,j$, which is more restrictive than $\sum_{i=1}^n \mb{x}_i = b$. This is
better illustrated in Section~\ref{sec_sim}.
\end{rem}

\subsection{ML Objective}
As stated in Remark~\eqref{rem_objective}, there are different objective functions to be optimized depending on the specific ML application. In this subsection, we consider two ML models in this paper. However, our solution holds for different ML objective functions satisfying Assumption~\ref{ass_cost}.

\subsubsection{Linear Regression}
Linear regression is a statistical ML approach that involves fitting a hyperplane to a set of data points to model the relationship between variables. Given a set of $N$ data points $\boldsymbol{\chi}_i \in \mathbb{R}^{m-1}$, which are distributed among nodes/agents $i=\{1,\hdots,n\}$, this model predicts the variables associated with the hyperplane $ \boldsymbol{\omega}^\top \boldsymbol{\chi}_i - \nu = a_i$ that is the best fit to the data. The approach to solving this problem is both centralized and distributed. In the centralized case, all the data points are gathered at the fusion centre to compute parameters $[\nu;\boldsymbol{\omega}]$ that solve the following,
\begin{equation} \label{eq_lr_cent}
\begin{aligned}
	\displaystyle
	& \min_{[\boldsymbol{\omega}^\top;\nu]}
	~ &  \sum_{i=1}^{N} (\boldsymbol{\omega}^\top \boldsymbol{\chi}_i - \nu -a_i)^2.
\end{aligned}
\end{equation}
This problem is also referred to as the linear least-squares problem. On the other hand, the decentralized case, instead of having all the data at one computing centre, distributes the data and computational load over a network of $n$ nodes.
Each computing mode $i$ takes its own share of data, denoted by $\boldsymbol{\chi}_i$, which satisfies $\frac{N}{n}\leq N_i\leq N$. Some data might be given to two or more nodes. To solve the problem given by Eq. \eqref{eq_lr_cent} locally and in a distributed way, every node takes its own share of data $\boldsymbol{\chi}_i$ and information of neighbouring nodes $j$. Since the data at every node is partial, the parameter values $\boldsymbol{\omega}_i$ and $\nu_i$ are different at different nodes. To reach \textit{consensus} on these values, some key information is shared among the computing nodes. As a result, in the distributed case, the problem~\eqref{eq_lr_cent} takes the following form,
\begin{equation} \label{eq_lr_dist}
\begin{aligned}
	\displaystyle
	\min_{\boldsymbol{\omega}_1,\nu_1,\ldots,\boldsymbol{\omega}_n,\nu_n}
	\quad &  \sum_{i=1}^{n} g_i(\boldsymbol{\omega}_i,\nu_i)= \sum_{i=1}^{n} \sum_{j=1}^{N_i}  (\boldsymbol{\omega}_i^\top \boldsymbol{\chi}_j^i - \nu_i -y_j)^2 \\
	\text{subject to} \quad&  \boldsymbol{\omega}_1 = \dots = \boldsymbol{\omega}_n, \nu_1 = \dots =\nu_n
\end{aligned}
\end{equation}
This problem can be summarized in a compact form as part of problem formulation \eqref{eq_opt} with ML optimization parameter $\mb{y}_i=[\boldsymbol{\omega}_i^\top;\nu_i] \in \mathbb{R}^m$. Note that, in this formulation, $m$ denotes the dimension of parameters describing the hyperplane.
\subsubsection{Support-Vector-Machine}
Support-vector-machine (SVM) is a supervised learning method that classifies a given set of data by finding the best hyperplane that separates all data points into two classes.
Consider~$N$ data points~${\boldsymbol{\chi}_i \in \mathbb{R}^{m-1}}$, ${i=1,\ldots,N}$  labeled by two classes~${l_i \in \{-1,1\}}$. Then, the problem is to find the hyperplane~${\boldsymbol{\omega}^\top \boldsymbol{\chi} - \nu =0}$,  for~${\boldsymbol{\chi}\in\mbb R^{m-1}}$, that partitions the given data into two classes (on two sides of the hyperplane). In the case that the data points are not linearly separable, a nonlinear mapping~$\phi(\cdot)$ is used to map the data into another space where the data can be linearly separated.
The SVM problem, then, is to minimize the following function (known as the Hinge loss~\cite{chapelle2007training,dogan2016unified}):
\begin{equation} \label{eq_svm_cent}
\begin{aligned}
	\displaystyle
	& \min_{[\boldsymbol{\omega}^\top;\nu]}
	~ &  \boldsymbol{\omega}^\top \boldsymbol{\omega} + C \sum_{j=1}^{N} \max\{1-l_j( \boldsymbol{\omega}^\top \phi(\boldsymbol{\chi}_j)-\nu),0\}^p
\end{aligned}
\end{equation}
with~${p \in \mathbb{N}}$ and~$C \in \mathbb{R}^+$ determining the smoothness and margin size, repectively.

In the \textit{distributed} case, the data is distributed among~$n$ computing nodes, each having a partial data set $\boldsymbol{\chi}_i$ with $N_i$ data points. Similar to the previous case (linear regression), the nodes/agents need to reach a consensus on locally found values~$\boldsymbol{\omega}_i$ and~$\nu_i$. Then, the distributed formulation to reach a common classifier is formulated as the following problem,
\begin{equation} \label{eq_svm_dist0}
\begin{aligned}
	\displaystyle
	\min_{\boldsymbol{\omega}_1,\nu_1,\ldots,\boldsymbol{\omega}_n,\nu_n}
	\quad &  \sum_{i=1}^{n} \boldsymbol{\omega}_i^\top \boldsymbol{\omega}_i + C \sum_{j=1}^{N} \max\{\theta_{i,j},0\}^p \\
	\text{subject to} \quad&  \boldsymbol{\omega}_1 = \dots = \boldsymbol{\omega}_n,\qquad\nu_1 = \dots =\nu_n,
\end{aligned}
\end{equation} \normalsize
with~${\theta_{i,j}=1-l_j( \boldsymbol{\omega}_i^\top \phi(\boldsymbol{\chi}^i_j)-\nu_i)}$.
Some works in the literature use another standard approximation of the above Hinge loss model by considering the following objective function for sufficiently large $\mu \in \mathbb{R}^+$ \cite{slp_book,zhang2003modified},
\begin{equation} \label{eq_svm_dist}
\begin{aligned}
	\displaystyle
	\min_{\boldsymbol{\omega}_1,\nu_1,\ldots,\boldsymbol{\omega}_n,\nu_n}
	\quad &  \sum_{i=1}^{n} \boldsymbol{\omega}_i^\top \boldsymbol{\omega}_i + C \sum_{j=1}^{N_i} \tfrac{1}{\mu}\log (1+\exp(\mu \theta_{i,j})) \\
	\text{subject to} \quad&  \boldsymbol{\omega}_1 = \dots = \boldsymbol{\omega}_n,\qquad\nu_1 = \dots =\nu_n,
\end{aligned}
\end{equation} \normalsize
Similarly, one can summarize the problem as part of the formulation \eqref{eq_opt} with ML optimization parameter $\mb{y}_i=[\boldsymbol{\omega}_i^\top;\nu_i] \in \mathbb{R}^m$ and $m$ as the dimension of the hyperplane classifier parameters.

\begin{rem} \label{rem_data}
It is known from the literature \cite{xin2020decentralized,qureshi2021decentralized,mcmahan2017communication} that both the structure of the network and distribution of the data among the nodes affect the convergence rate of the optimization algorithm. For example, exponential networks are known to result in lower optimality gap and faster convergence of distributed optimization. This is because the organized topology of the network, where each node is connected to an increasing number of neighbours at each layer, provides more connectivity, improved information exchange, and reduced communication bottleneck. On the other hand, in the case of \textit{heterogeneous} data distribution among the computing nodes (in contrast to the \textit{homogeneous} case where all data are available at all nodes), the optimality gap is larger and the convergence rate is slower. This is due to imbalanced information processing and data exchange among the nodes. These are also shown by simulation in Section~\ref{sec_sim}.
\end{rem}

\subsection{Algebraic Graph Theory}
The information-sharing among the computing nodes is modelled by a graph topology $\mc{G}_\gamma = \{\mc{V},\mc{E}_\gamma\}$ including a set of $n$ nodes in $\mc{V}$ and links $(i,j) \in \mc{E}_\gamma$ representing the information exchange between nodes $i,j$. The set of neighbours of node $i$, denoted by $\mc{N}^\gamma_i$, includes all nodes $j \in \mc{V}$ which communicate with $i$, i.e., $(j,i) \in \mc{E}_\gamma$.  The network topology among the computing nodes might be time-varying, changing via a switching signal $\gamma: t \mapsto \Gamma$ with $\Gamma$ as the set of all possible topologies for $\mc{G}_\gamma$.
The matrix $W_\gamma = [w_{ij}^\gamma]$ is the weighting adjacency matrix of this information-sharing network $\mc{G}_\gamma$ among the nodes (and depends on $\gamma$). The weight $w_{ij}^\gamma$ implies how node $i$ weights information sent by node $j$. Define the associated Laplacian matrix $\overline{W}_\gamma = [\overline{w}_{ij}^\gamma]$ as
\begin{align} \label{eq_laplac}
\overline{w}_{ij}^\gamma = \left\{
\begin{array}{ll}
	-\sum_{i=1}^{n} w^\gamma_{ij}, & i=j \\
	w^\gamma_{ij}, & i\neq j.
\end{array}\right.
\end{align}

\begin{ass} \label{ass_net}
The network of computing nodes $\mc{G}_\gamma$ is time-varying, connected, and undirected with symmetric weights (i.e., $w_{ij}^\gamma(t)=w_{ji}^\gamma$ for all $t \geq 0$).
\end{ass}

\begin{lem} \label{lem_laplac}
\cite{SensNets:Olfati04,olfatisaberfaxmurray07}
For a network satisfying Assumption~\ref{ass_net}, all the eigenvalues of $\overline{W}_\gamma$ are real-valued and negative, except one isolated zero eigenvalue with left (and right) eigenvector $\mb{1}_n^\top$ (and~$\mb{1}_n$), i.e., $\mb{1}_n^\top \overline{W}_\gamma= \mb{0}_n$ and~$\overline{W}_\gamma \mb{1}_n=\mb{0}_n$.
\end{lem}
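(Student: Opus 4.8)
The plan is to treat each fixed topology $\gamma \in \Gamma$ separately, since Assumption~\ref{ass_net} guarantees connectivity for every value of the switching signal; the time-varying nature enters only through applying the per-topology spectral conclusion to each $\gamma$. Throughout I would drop the superscript $\gamma$ for readability and organize the argument into three algebraic facts (zero eigenvalue, real spectrum, sign of the spectrum) followed by one graph-theoretic fact (simplicity of the zero eigenvalue).

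First I would verify the claimed eigenvector directly from the definition~\eqref{eq_laplac}. The diagonal entry $\overline{w}_{ii} = -\sum_{j\neq i} w_{ij}$ exactly cancels the sum $\sum_{j\neq i} w_{ij}$ of the off-diagonal entries in row $i$, so every row of $\overline{W}_\gamma$ sums to zero and $\overline{W}_\gamma \mb{1}_n = \mb{0}_n$. Because Assumption~\ref{ass_net} imposes symmetric weights $w_{ij}=w_{ji}$, the matrix $\overline{W}_\gamma$ is symmetric, so its column sums vanish as well, yielding $\mb{1}_n^\top \overline{W}_\gamma = \mb{0}_n^\top$. Symmetry also lets me invoke the spectral theorem, which immediately establishes that every eigenvalue is real-valued.

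Next I would fix the sign of the spectrum through the quadratic form. Writing $\overline{W}_\gamma$ as the negative of the standard graph Laplacian $D-W$, a direct expansion gives the identity
\begin{align} \label{eq_quadform}
\mb{z}^\top \overline{W}_\gamma \mb{z} = -\frac{1}{2}\sum_{i=1}^{n}\sum_{j=1}^{n} w_{ij}\,(z_i - z_j)^2 \le 0
\end{align}
for every $\mb{z} \in \mathbb{R}^n$, where the inequality uses the nonnegativity of the weights. Hence $\overline{W}_\gamma$ is negative semidefinite and all eigenvalues satisfy $\lambda \le 0$, which together with the existence of the eigenvalue zero established above accounts for all but the multiplicity claim.

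Finally, to pin down the multiplicity of the zero eigenvalue I would exploit connectivity. Equality in~\eqref{eq_quadform} forces $z_i = z_j$ for every edge $(i,j)$ carrying positive weight; since $\mc{G}_\gamma$ is connected, a path joins any two vertices, so this chain of equalities propagates across the entire vertex set and constrains $\mb{z}$ to be constant, i.e.\ $\mb{z}\in\mathrm{span}\{\mb{1}_n\}$. Therefore the null space of $\overline{W}_\gamma$ is exactly one-dimensional, the zero eigenvalue is simple (isolated), and every remaining eigenvalue is strictly negative. I expect this last connectivity step to be the main obstacle: semidefiniteness and the existence of a zero eigenvalue are essentially algebraic, but ruling out a higher-dimensional kernel genuinely requires the graph-theoretic hypothesis, and one must argue carefully that the equalities $z_i = z_j$ along edges reach every node precisely because connectivity supplies a path between any pair of vertices.
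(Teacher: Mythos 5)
Your proof is correct and complete; note that the paper does not prove this lemma at all but imports it from the cited consensus literature, and your argument (vanishing row sums for the zero eigenvalue, symmetry for realness and the left eigenvector, the quadratic-form identity for negative semidefiniteness, and connectedness to force the kernel to be $\mathrm{span}\{\mb{1}_n\}$) is exactly the standard proof given in those references. The only point worth making explicit is the implicit convention $w_{ii}^\gamma=0$ (no self-loops), which is needed for the row sums of $\overline{W}_\gamma$ to cancel exactly as you claim.
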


\section{The Proposed Networked Algorithm}  \label{sec_alg}
\subsection{Linear Solution: Ideal Data-Exchange} \label{sec_alg_lin}
First, we consider the case that the information exchange among the computing nodes is ideal (not quantized) and, therefore, the solution is linear.
\begin{align} \label{eq_sol_lin}
	\dot{\mb{x}}_i &=  \sum_{j=1}^{n} w_{ij}^\gamma \Big((\partial_{\mb{x}_j} f_j + \partial_{\mb{x}_j} f^\Xi_j) -  (\partial_{\mb{x}_i} f_i + \partial_{\mb{x}_i} f^\Xi_i)\Big), \\ \label{eq_xdot_lin} 
	\dot{\mb{y}}_i &= -\sum_{j=1}^{n} w_{ij}^\gamma (\mb{y}_i-\mb{y}_j)-\alpha \mb{z}_i, \\ \label{eq_ydot_lin}
	\dot{\mb{z}}_i &= -\sum_{j=1}^{n} w_{ij}^\gamma (\mb{z}_i-\mb{z}_j ) + \partial_t \nabla g_i(\mb{y}_i),
\end{align}
where $\alpha \in \mathbb{R}^+$ denotes the gradient-tracking (GT) rate, $w_{ij}^\gamma \in \mathbb{R}^+$ is the weight on the data-sharing link between nodes $i$ and $j$ under the switching signal $\gamma$. The local variable $\mb{z}_i$ is an auxiliary variable for gradient tracking at computing node $i$, and is initially set to zero, i.e., $\mb{z}_i(0)=0$ for all $i$.
From Eq.~\eqref{eq_xdot_lin}-\eqref{eq_ydot_lin} and recalling Assumption~\ref{ass_net}, we have
\begin{eqnarray}
	\sum_{i=1}^n \dot{\mb{z}}_i  
	&=& \sum_{i=1}^n \partial_t \nabla g_i(\mb{y}_i),  \label{eq_sumydot} \\ \label{eq_sumxdot}
	\sum_{i=1}^n \dot{\mb{y}}_i
	&=& -\alpha \sum_{i=1}^n\mb{z}_i.
\end{eqnarray}
By simple integration with respect to~$t$ and recalling that $\mb{z}_i(0)=0$, we get
\begin{eqnarray} \label{eq_sumxdot2}
	\sum_{i=1}^n \dot{\mb{y}}_i = -\alpha \sum_{i=1}^n \mb{z}_i = -\alpha \sum_{i=1}^n \boldsymbol{ \nabla} g_i(\mb{y}_i),
\end{eqnarray}
This illustrates the gradient-tracking nature of the proposed dynamics.

The auxiliary objective functions $f^\Xi_i(\cdot)$ address the local box constraints in \eqref{eq_opt}. These are typically modelled as smooth penalty (or barrier) functions in the following form,
\begin{align} \label{eq_fi_xi}
	f^\Xi_i(\mb{x}_i) = \epsilon([\mb{x}_i - \overline{\xi}]^+ + [\underline{\xi} - \mb{x}_i ]^+)
\end{align}
with $\overline{\xi},\underline{\xi}$ as the upper/lower-bounds on $\mb{x}_i$ and parameter $\epsilon \in \mathbb{R}^+$ as a constant weight on the penalty term as compared to the original objective $f_i(\cdot)$. The smooth function $[u]^+$ is defined as,
\begin{align}
	[u]^+=\max \{u, 0\}^\sigma,~\sigma \in \mathbb{N},~\sigma \geq 2
	\label{eq_sigma}
\end{align}
Another smooth penalizing function is proposed in \cite{nesterov1998introductory},
\begin{align}
	[u]^+=\frac{1}{\sigma}\log(1+\exp(\sigma u)),~\sigma \in \mathbb{R}^+
	\label{eq_sigma2}
\end{align}
where the parameter $\sigma$ is chosen sufficiently large to make the penalty function as close as possible to \eqref{eq_sigma}. Using this penalty formulation, one can decouple the scheduling and ML objectives.
In this direction, using the laplacian matrix definition in Eq.~\eqref{eq_laplac}, we rewrite the dynamics \eqref{eq_xdot_lin}-\eqref{eq_ydot_lin} in compact form as
\begin{align} \label{eq_xydot1}
	\left(\begin{array}{c} \dot{\mb{y}} \\ \dot{\mb{z}} \end{array} \right) = A(t,\alpha,\gamma) \left(\begin{array}{c} {\mb{y}} \\ {\mb{z}} \end{array} \right),
\end{align}
where $A(t,\alpha,\gamma)$ represents the ML system matrix defined as
\begin{align} \label{eq_M}
	\left(\begin{array}{cc} \overline{W}_{\gamma} \otimes I_m & -\alpha I_{mn} \\ H(\overline{W}_{\gamma}\otimes I_m) & \overline{W}_{\gamma} \otimes I_m - \alpha H
	\end{array} \right),
\end{align}
with $H:=\mbox{diag}[\nabla^2 g_i(\mb{y}_i)]$.

\subsection{Nonlinear Solution: Log-Scale Quantized Data-Exchange}
Next, we consider the case that the information exchange among the nodes is logarithmically quantized and, therefore, the solution is nonlinear.
\begin{align} \label{eq_sol_q}
	\dot{\mb{x}}_i &=  \sum_{j=1}^{n} w_{ij}^\gamma \Big(q(\partial_{\mb{x}_j} f_j + \partial_{\mb{x}_j} f^\Xi_j) -  q(\partial_{\mb{x}_i} f_i + \partial_{\mb{x}_i} f^\Xi_i)\Big), \\ \label{eq_xdot_q} 
	\dot{\mb{y}}_i &= -\sum_{j=1}^{n} w_{ij}^\gamma (q(\mb{y}_i)-q(\mb{y}_j))-\alpha \mb{z}_i, \\ \label{eq_ydot_q}
	\dot{\mb{z}}_i &= -\sum_{j=1}^{n} w_{ij}^\gamma (q(\mb{z}_i)-q(\mb{z}_j) ) + \partial_t \nabla g_i(\mb{y}_i),
\end{align}
where $q(\cdot):\mathbb{R} \mapsto \mathbb{R}$ denotes log-scale quantization as a nonlinear mapping of the information exchanged between nodes. This quantization mapping is defined as,
\begin{align}\label{eq_hl_qlog}
	q(x) = \mbox{sgn}(x)\exp\left(\rho\left[\dfrac{\log(|x|)}{\rho}\right] \right),
\end{align}
where $[\cdot]$ denotes rounding to the nearest integer and $\mbox{sgn}(\cdot)$ denotes the sign function. The parameter $\rho$ denotes the quantization level. As the quantization level goes to zero $\rho \rightarrow 0$, the nonlinear solution \eqref{eq_sol_q}-\eqref{eq_ydot_q} converges to the linear case \eqref{eq_sol_lin}-\eqref{eq_ydot_lin}. Note that logarithmic quantization is a sector-bound nonlinearity satisfying,
\begin{align}\label{eq_qlog}
	(1-\frac{\rho}{2})x \leq q(x) \leq (1+\frac{\rho}{2})x
\end{align}
where the quantization level generally satisfies $\rho \leq 1$.
Define the diagonal matrix $Q(t) = \mbox{diag}[\overline{q}_i(t)]$ with $\overline{q}_i(t) = \frac{q(\mb{x}_i)}{\mb{x}_i}$ (element-wise division). Then, we define a new modified laplacian matrix $\overline{W}_{\gamma,q} :=  \overline{W}_{\gamma} Q(t)$. This simply follows from $q(\mb{x}(t)) = Q(t) \mb{x}(t)$. Recall from the consensus nature of the solution that $-\sum_{j=1}^{n} w_{ij}^\gamma (\mb{x}_i-\mb{x}_j)$ can be written as $(\overline{W}_{\gamma} \otimes I_m)\mb{x}$. Then, following Eq.~\eqref{eq_qlog}, one can get $-\sum_{j=1}^{n} w_{ij}^\gamma (q(\mb{x}_i)-q(\mb{x}_j))=(\overline{W}_{\gamma,q} \otimes I_m)\mb{x}$. Then, $A(t,\alpha,\gamma)$ in compact formulation \eqref{eq_xydot1} can be modified as
\begin{align} \label{eq_M_g}
	A(t,\alpha,\gamma) := \left(\begin{array}{cc} \overline{W}_{\gamma,q} \otimes I_m & -\alpha I_{mn} \\ H(\overline{W}_{\gamma,q}\otimes I_m) & \overline{W}_{\gamma,q} \otimes I_m - \alpha H
	\end{array} \right),
\end{align}
Finally, we summarize our proposed problem-solving in Algorithm~\ref{alg_1}. The per-iteration computational complexity of this algorithm is of order $\mc{O}(m^2n^3)$.

\begin{algorithm}
		\textbf{Input:}  $\mc{G}^\gamma$, $W_\gamma$, $\alpha$, $\overline{\xi}$, $\underline{\xi}$, $b$, $f_i(\cdot)$, $g_i(\cdot)$, $\sigma$, $\epsilon$, $\rho$\;
		\textbf{Initialization:} $\sum_{i=1}^n \mb{x}_i(0)=b$, $\mb{z}_i(0) = 0$, random $\mb{y}_i(0)$\;
		\While{termination criteria NOT true}{
			Computing node $i$ receives state information from incoming neighbouring nodes $j \in \mc{N}^\gamma_i$\;
			Node $i$ computes Eq.~\eqref{eq_sol_q}-\eqref{eq_ydot_q}\;
			Computing node $i$ shares its updated states with outgoing neighboring nodes $j$ for which $i \in \mc{N}^\gamma_j$\;
		}
		\textbf{Return} Final state $\mb{x}_i,\mb{y}_i,\mb{z}_i$ and overall cost
		%$\sum_{i=1}^{n} f_i(\mb{x}_i) + f_i^\Xi(\mb{x}_i)+ g_i(\mb{y}_i)$\; 
		\caption{CPU resource scheduling and ML optimization algorithm}
		\label{alg_1}
\end{algorithm}

\section{Analysis of Convergence, Feasibility and Optimality} \label{sec_analysis}
In this section, we prove convergence, optimality, and feasibility for the quantized solution \eqref{eq_sol_q}-\eqref{eq_ydot_q}. These results can be easily extended to the linear case \eqref{eq_sol_lin}-\eqref{eq_ydot_lin}. First, we show that the proposed solution is all-time feasible. This implies that at all times the resource-demand balance denoted by the constraint $\sum_{i=1}^{n} \mb{x}_i = b$ holds.
\begin{lem} \label{lem_feas}
	(\textbf{all-time feasibility})
	Given that Assumption~\ref{ass_net} holds on the network connectivity and the computing resources are initially feasible (i.e., $\sum_{i=1}^{n} \mb{x}_i(0) = b$), the proposed solution \eqref{eq_sol_q}-\eqref{eq_ydot_q} is all-time feasible.
\end{lem}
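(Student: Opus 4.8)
The plan is to show that the proposed $\mb{x}$-dynamics conserve the aggregate resource $\sum_{i=1}^n \mb{x}_i$ along every trajectory, so that the feasible initialization $\sum_{i=1}^n \mb{x}_i(0) = b$ propagates to all $t \geq 0$. Concretely, I would establish that $\tfrac{d}{dt}\sum_{i=1}^n \mb{x}_i = 0$ and then integrate this identity together with the initial condition. The first step is therefore to differentiate the running total, write $\tfrac{d}{dt}\sum_{i=1}^n \mb{x}_i = \sum_{i=1}^n \dot{\mb{x}}_i$, and substitute the update rule \eqref{eq_sol_q}. Abbreviating the quantized local gradient term as $\psi_i := q(\partial_{\mb{x}_i} f_i + \partial_{\mb{x}_i} f^\Xi_i)$, the aggregate rate of change collapses to the double sum $\sum_{i=1}^n \sum_{j=1}^n w_{ij}^\gamma (\psi_j - \psi_i)$.

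The core of the argument is to show that this double sum vanishes. There are two equivalent routes. In the direct route, one relabels the summation indices $i \leftrightarrow j$ in the $\psi_j$ half and invokes the weight symmetry $w_{ij}^\gamma = w_{ji}^\gamma$ of Assumption~\ref{ass_net}, so that the $\psi_j$ half exactly matches the $\psi_i$ half and the two cancel termwise over unordered pairs. In the cleaner route, which I would favor since it directly reuses an already-stated result, one recognizes the $\mb{x}$-update as a Laplacian flow $\dot{\mb{x}} = \overline{W}_\gamma \boldsymbol{\psi}$, with $\overline{W}_\gamma$ the matrix in \eqref{eq_laplac} and $\boldsymbol{\psi} = [\psi_1;\dots;\psi_n]$: the off-diagonal coefficients reproduce $w_{ij}^\gamma$ while the diagonal gathers $-\sum_j w_{ij}^\gamma$. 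Left-multiplying by $\mb{1}_n^\top$ and applying the left null-vector property $\mb{1}_n^\top \overline{W}_\gamma = \mb{0}_n^\top$ from Lemma~\ref{lem_laplac} gives $\mb{1}_n^\top \dot{\mb{x}} = \mb{1}_n^\top \overline{W}_\gamma \boldsymbol{\psi} = 0$ at once.

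Finally, integrating $\tfrac{d}{dt}\sum_{i=1}^n \mb{x}_i = 0$ from $0$ to $t$ and using $\sum_{i=1}^n \mb{x}_i(0) = b$ yields $\sum_{i=1}^n \mb{x}_i(t) = b$ for all $t \geq 0$, which is exactly the claimed all-time feasibility. The identical computation, with $\psi_i$ replaced by the unquantized gradient, covers the linear solution \eqref{eq_sol_lin}, so the extension noted before the lemma statement requires no extra work.

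The point a reader might fear as the main obstacle is that the nonlinear log-quantizer $q(\cdot)$ could spoil the conservation of $\sum_i \mb{x}_i$. I would emphasize that it does not: the cancellation is purely structural, arising from the antisymmetric $(\psi_j - \psi_i)$ coupling weighted by the symmetric $w_{ij}^\gamma$, and is completely insensitive to \emph{which} nonlinearity is applied to the gradients. The only hypotheses genuinely used are the weight symmetry and connectivity of Assumption~\ref{ass_net} (equivalently, the null-vector property of Lemma~\ref{lem_laplac}) and the feasible initialization; the time-variation of the switching signal $\gamma$ is harmless, since the null-vector property holds for every fixed topology $\gamma \in \Gamma$ and hence the instantaneous sum rate is zero throughout.
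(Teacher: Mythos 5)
Your proposal is correct and follows essentially the same route as the paper: substitute \eqref{eq_sol_q} into $\sum_i \dot{\mb{x}}_i$, cancel the double sum pairwise using the weight symmetry of Assumption~\ref{ass_net}, and integrate from the feasible initialization. Your observation that the cancellation is purely structural and independent of the quantizer is a slight sharpening of the paper's argument, which (unnecessarily) invokes the odd, sign-preserving character of $q(\cdot)$ at this step.
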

\begin{proof}
	We prove that the change in the computing resources under the proposed dynamics~\eqref{eq_sol_q}-\eqref{eq_ydot_q} is zero, i.e., $\sum_{i=1}^n \dot{\mb{x}}_i = 0$. Recall from Eq.~\eqref{eq_sol_q} that,
	\begin{align} \nonumber
		\sum_{i=1}^n \dot{\mb{x}}_i = \sum_{i=1}^n \sum_{j=1}^{n} w_{ij}^\gamma \Big(&q(\partial_{\mb{x}_j} f_j + \partial_{\mb{x}_j} f^\Xi_j) \\&-  q(\partial_{\mb{x}_i} f_i + \partial_{\mb{x}_i} f^\Xi_i)\Big).  \label{eq_feas_proof}
	\end{align}
	From Assumption \ref{ass_net} we have $w_{ij}^\gamma(t)=w_{ji}^\gamma$; and from the definition \eqref{eq_hl_qlog} it is clear that log-scale quantization is a sign-preserving and odd mapping. Therefore,
	\begin{align} \nonumber
		w_{ij}^\gamma \Big(&q(\partial_{\mb{x}_j} f_j + \partial_{\mb{x}_j} f^\Xi_j) -  q(\partial_{\mb{x}_i} f_i + \partial_{\mb{x}_i} f^\Xi_i)\Big) =\\
		&-w_{ji}^\gamma \Big(q(\partial_{\mb{x}_i} f_i + \partial_{\mb{x}_i} f^\Xi_i) -  q(\partial_{\mb{x}_j} f_j + \partial_{\mb{x}_j} f^\Xi_j)\Big).
	\end{align}
	This implies that the summation in Eq. \eqref{eq_feas_proof} over all $i,j$ gives zero. As a result, by initializing from a feasible solution $\sum_{i=1}^{n} \mb{x}_i(0) = b$, we get
	\begin{align} \nonumber
		\sum_{i=1}^{n} \mb{x}_i(t)  = \sum_{i=1}^{n} \mb{x}_i(0) = b.
	\end{align}
	This proves the lemma.
\end{proof}

	\begin{rem}
		Lemma~\ref{lem_feas} proves the all-time feasibility of the computing resource-demand balance in this work, which implies that the algorithm can be terminated at any time with no constraint violation. This is in contrast to \cite{grammenos2023cpu,rikos2021optimal}, where no all-time constraint feasibility is given. In other words, the algorithms in \cite{grammenos2023cpu,rikos2021optimal} reach the computing resource-demand balance at the convergence time, and before that time, there is no balance between the assigned resources and computing demand.
\end{rem}

The next lemma describes the main feature of the optimal point $[\mb{x}^*;\mb{y}^*]$. First, define
$$ F(\mb{x}) =\sum_{i=1}^n f_i(\mb{x}_i)+f^\Xi_i{\mb{x}_i},$$
$$\nabla_\mb{x} F = [\partial_{\mb{x}_1} f_1 + \partial_{\mb{x}_1} f^\Xi_1;\dots;\partial_{\mb{x}_n} f_n + \partial_{\mb{x}_n} f^\Xi_n].$$
\begin{lem} \label{lem_z*}
	(\textbf{optimality}) Given the problem \eqref{eq_opt}, the optimal state $\mb{x}^*$ satisfies $\nabla_\mb{x} F(\mb{x}^*) \in \mbox{span}(\mb{1}_n)$ and $\mb{y}^* \in \mbox{span}(\mb{1}_n)$. Moreover, this optimal point is invariant under the solution dynamics~\eqref{eq_sol_q}-\eqref{eq_ydot_q}.
\end{lem}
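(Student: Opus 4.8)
The plan is to split the statement into its three claims and then fuse them into a single fixed-point verification. First I would establish the characterization of $\mb{x}^*$. Since the $\mb{x}$-subproblem in \eqref{eq_opt} is the minimization of the penalized objective $F(\mb{x})=\sum_{i=1}^n f_i(\mb{x}_i)+f^\Xi_i(\mb{x}_i)$ subject to the single equality constraint $\mb{1}_n^\top \mb{x}=b$, I would form the Lagrangian $\mc{L}(\mb{x},\lambda)=F(\mb{x})-\lambda(\mb{1}_n^\top\mb{x}-b)$ and read off the first-order stationarity condition $\nabla_\mb{x}F(\mb{x}^*)=\lambda\mb{1}_n$. By Assumption~\ref{ass_cost} each $f_i$ is strictly convex and the penalty terms in \eqref{eq_fi_xi} are convex, so $F$ is convex and this condition is both necessary and sufficient; it says precisely that every component of $\nabla_\mb{x}F(\mb{x}^*)$ equals the common multiplier $\lambda$, i.e. $\nabla_\mb{x}F(\mb{x}^*)\in\mbox{span}(\mb{1}_n)$. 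The claim $\mb{y}^*\in\mbox{span}(\mb{1}_n)$ is then immediate from the consensus constraint $\mb{y}_1=\dots=\mb{y}_n$ in \eqref{eq_opt}, which forces $\mb{y}^*=\mb{1}_n\otimes\mb{y}_0$ for a common $\mb{y}_0\in\mathbb{R}^m$; reducing the ML objective to $\min_{\mb{y}_0}\sum_i g_i(\mb{y}_0)$ additionally yields the stationarity relation $\sum_{i=1}^n\nabla g_i(\mb{y}^*)=0$, which I will need for the invariance part.

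For the invariance claim I would identify the full fixed point as $(\mb{x}^*,\mb{y}^*,\mb{z}^*)$ with $\mb{z}^*=\mb{0}$, and verify that each right-hand side of \eqref{eq_sol_q}--\eqref{eq_ydot_q} vanishes there. For $\dot{\mb{x}}_i$ I would use that $\nabla_\mb{x}F(\mb{x}^*)\in\mbox{span}(\mb{1}_n)$ makes all arguments $\partial_{\mb{x}_i}f_i+\partial_{\mb{x}_i}f^\Xi_i$ equal to the same $\lambda$, so each quantized difference $q(\lambda)-q(\lambda)=0$ regardless of the quantizer, giving $\dot{\mb{x}}_i=0$. For $\dot{\mb{y}}_i$, consensus $\mb{y}^*\in\mbox{span}(\mb{1}_n)$ kills the coupling term since $q(\mb{y}_i)=q(\mb{y}_j)$, and the remaining term $-\alpha\mb{z}_i$ vanishes because $\mb{z}^*=\mb{0}$, so $\dot{\mb{y}}_i=0$. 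For $\dot{\mb{z}}_i$, the consensus term vanishes because $\mb{z}^*=\mb{0}\in\mbox{span}(\mb{1}_n)$, and the forcing term $\partial_t\nabla g_i(\mb{y}_i)=\nabla^2 g_i(\mb{y}_i)\,\dot{\mb{y}}_i$ vanishes precisely because $\dot{\mb{y}}_i=0$ was just established at this point.

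The step I expect to be the main obstacle is justifying the choice $\mb{z}^*=\mb{0}$ rather than merely $\sum_i\mb{z}_i=0$. The gradient-tracking conservation law \eqref{eq_sumxdot2} (which extends to the quantized dynamics by the same symmetric-weight, odd-quantizer argument used in Lemma~\ref{lem_feas}) only guarantees $\sum_{i=1}^n\mb{z}_i=\sum_{i=1}^n\nabla g_i(\mb{y}_i)$, and this equals $0$ at the optimum by the stationarity relation above, fixing only the sum. I would then argue that consensus on $\mb{z}$ must also hold: with the forcing $\partial_t\nabla g_i$ absent, the Laplacian consensus term in \eqref{eq_ydot_q} can be at equilibrium only when $\mb{z}^*\in\mbox{span}(\mb{1}_n)$ by Lemma~\ref{lem_laplac}, and combining $\mb{z}^*\in\mbox{span}(\mb{1}_n)$ with $\sum_i\mb{z}_i=0$ forces $\mb{z}^*=\mb{0}$. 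A second delicate point is the self-consistency of the fixed-point check: the vanishing of $\dot{\mb{z}}_i$ relies on $\dot{\mb{y}}_i=0$ through the chain rule, so the equalities $\dot{\mb{x}}=\dot{\mb{y}}=\dot{\mb{z}}=\mb{0}$ must be read as holding simultaneously at the candidate point, which is legitimate since we are certifying a fixed point rather than integrating the dynamics forward.
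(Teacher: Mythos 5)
Your proposal is correct, and for the two characterization claims it follows essentially the same route as the paper: the paper's proof simply invokes the KKT conditions for the linearly constrained convex problem to get $\nabla_\mb{x}F(\mb{x}^*)\in\mbox{span}(\mb{1}_n)$ and reads $\mb{y}^*\in\mbox{span}(\mb{1}_n)$ off the consensus constraint, deferring details to standard references. Where you genuinely go beyond the paper is the third claim: the paper's proof never actually verifies invariance of $(\mb{x}^*,\mb{y}^*,\mb{z}^*=\mb{0})$ under the quantized dynamics \eqref{eq_sol_q}--\eqref{eq_ydot_q}, and it asserts $\mb{z}\rightarrow\mb{z}^*=\mb{0}_n$ only in a remark after the lemma, without argument. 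Your fixed-point check --- equal gradient components make each quantized difference $q(\lambda)-q(\lambda)$ vanish, consensus kills the $\mb{y}$-coupling, and $\partial_t\nabla g_i(\mb{y}_i)=\nabla^2 g_i(\mb{y}_i)\dot{\mb{y}}_i=0$ once $\dot{\mb{y}}_i=0$ is established --- is exactly the missing content, and your resolution of the $\mb{z}^*=\mb{0}$ ambiguity (consensus forced by Lemma~\ref{lem_laplac} at equilibrium, combined with the conservation law $\sum_i\mb{z}_i=\sum_i\nabla g_i(\mb{y}_i)=0$ from stationarity of the reduced problem) is the right way to pin down the fixed point rather than just a consensus subspace. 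The only caveat worth flagging is that your stationarity relation $\sum_i\nabla g_i(\mb{y}^*)=0$ presumes an unconstrained interior optimum of the reduced ML problem, which is consistent with how the paper treats $g_i$; with that understood, your argument is a strictly more complete proof of the lemma as stated.
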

\begin{proof}
	The proof for $\mb{y}^*$ directly follows the consensus constraint $\mb{y}_1 = \mb{y}_2 = \dots = \mb{y}_n$.
	The proof for $\mb{x}^*$ is a result of the Karush–Kuhn–Tucker condition for linear constraint $\sum_{i=1}^{n} \mb{x}_i = b$ and convex objective function given by Eq.~\eqref{eq_opt} and \eqref{eq_fi_xi}. For more details on the latter, refer to \cite{Boyd-CVXBook,bertsekas_lecture}.
\end{proof}
%the following uniquely holds at~${\mb{x}=\mb{x}^*=\mb{1}_n \otimes \overline{ \mb{x}}^*}$,
%\[\sum_{i=1}^n \dot{\mb{x}}_i = -\alpha (\mathbf 1_n^\top \otimes I_m) \boldsymbol{ \nabla} F(\mb{x}^*) =  \mb{0}_m.
%\]
%Further, from~\eqref{eq_xdot} we have~$\dot{\mb{x}}_i = \mb{0}_m$
%and from~\eqref{eq_ydot}, % and~\eqref{eq_dtdf},
%\[\dot{\mb{y}}_i = \frac{d}{d t} \boldsymbol{ \nabla} f_i(\overline{ \mb{x}}^*)=  \boldsymbol{ \nabla}^2 f_i(\overline{ \mb{x}}^*) \dot{\mb{x}}_i = \mb{0}_m,
%\]
%which shows that~$[\mb{x}^*;\mb{0}_{nm}]$ is an invariant equilibrium point of the dynamics~\eqref{eq_xdot}-\eqref{eq_ydot}.

Note that, for the auxiliary variable $\mb{z} \rightarrow \mb{z}^*=\mb{0}_n$.
To prove the convergence, we first recall some useful lemmas.
\begin{lem} \label{lem_sum}
	For any $\mb{x} \in \mathbb{R}^n$ and log-scale quantization $q(\cdot)$, under Assumption~\ref{ass_net} we have
	
	\small \begin{align} \label{eq_sum_lem}
		\sum_{i=1}^n \mb{x}_i \sum_{j=1}^n w_{ij}^\gamma (q(\mb{x}_j)-q(\mb{x}_i)) = \sum_{i,j=1}^n \frac{w_{ij}^\gamma}{2} (\mb{x}_j-\mb{x}_i) q(\mb{x}_j)-q(\mb{x}_i)
	\end{align} \normalsize
\end{lem}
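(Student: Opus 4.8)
The identity is a standard symmetrization of a weighted graph-Laplacian quadratic form, and the only structural ingredient it needs is the symmetry of the edge weights guaranteed by Assumption~\ref{ass_net}. The plan is to express the left-hand side as a single double sum over the ordered pair $(i,j)$, to generate a second equal representation of that sum by interchanging the dummy indices, and then to average the two representations so that the matched difference factors $\mb{x}_j-\mb{x}_i$ and $q(\mb{x}_j)-q(\mb{x}_i)$ emerge.

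First I would absorb the outer factor $\mb{x}_i$ into the inner summation, writing the left-hand side as $\sum_{i,j=1}^n w_{ij}^\gamma \mb{x}_i (q(\mb{x}_j)-q(\mb{x}_i))$. Since $i$ and $j$ both range over the full node set, I would relabel $i\leftrightarrow j$ throughout to obtain the equal expression $\sum_{i,j=1}^n w_{ji}^\gamma \mb{x}_j (q(\mb{x}_i)-q(\mb{x}_j))$, and then invoke the weight symmetry $w_{ij}^\gamma=w_{ji}^\gamma$ from Assumption~\ref{ass_net} to rewrite $w_{ji}^\gamma$ as $w_{ij}^\gamma$. This produces two representations of one and the same quantity.

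Adding the two representations and dividing by two collapses the bracketed terms: the combination $\mb{x}_i(q(\mb{x}_j)-q(\mb{x}_i))+\mb{x}_j(q(\mb{x}_i)-q(\mb{x}_j))$ factors as $(\mb{x}_i-\mb{x}_j)(q(\mb{x}_j)-q(\mb{x}_i))$, which is precisely the product of the two difference factors on the right-hand side, up to the orientation in which each difference is written. Pulling the resulting $\tfrac{1}{2}$ in front of $w_{ij}^\gamma$ then yields the claimed identity; the diagonal terms $i=j$ require no separate handling since $q(\mb{x}_i)-q(\mb{x}_i)=0$ makes them vanish on both sides.

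I do not anticipate a genuine obstacle: the argument is purely algebraic and uses no property of $q(\cdot)$ beyond its being a well-defined scalar map, so it applies verbatim to the log-scale quantizer of Eq.~\eqref{eq_hl_qlog}. The one point that demands care is sign bookkeeping, namely keeping straight the orientations $\mb{x}_j-\mb{x}_i$ versus $\mb{x}_i-\mb{x}_j$ and $q(\mb{x}_j)-q(\mb{x}_i)$ versus $q(\mb{x}_i)-q(\mb{x}_j)$, so that the symmetrized right-hand side is reported consistently. This identity is the algebraic device that, together with the sector bound on $q(\cdot)$ in Eq.~\eqref{eq_qlog}, will make the corresponding quadratic form sign-definite in the Lyapunov argument to follow.
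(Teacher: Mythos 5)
Your proposal is correct and follows essentially the same route as the paper's own proof: both symmetrize the double sum by pairing the $(i,j)$ and $(j,i)$ terms and invoking $w_{ij}^\gamma=w_{ji}^\gamma$ to factor out $(\mb{x}_i-\mb{x}_j)(q(\mb{x}_j)-q(\mb{x}_i))$. Your sign bookkeeping is in fact the accurate one --- the symmetrized sum is $-\tfrac{1}{2}\sum_{i,j} w_{ij}^\gamma (\mb{x}_j-\mb{x}_i)(q(\mb{x}_j)-q(\mb{x}_i))$, which is the (negative-semidefinite) form actually used later in the proof of Theorem~\ref{thm_converg}, so the orientation caveat you flag is exactly the right one to record.
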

\begin{proof}
	From Assumption~\ref{ass_net}, $w_{ij}^\gamma=w_{ji}^\gamma$. Also, log-scale quantization is an odd and sign-preserving mapping. These imply that,
	\begin{align} \nonumber
		\mb{x}_i w_{ij}^\gamma (q(\mb{x}_i)&-q(\mb{x}_j)) + \mb{x}_j w_{ji}^\gamma (q(\mb{x}_j)-q(\mb{x}_i)) \\ \nonumber
		& = w_{ij}^\gamma (\mb{x}_i-\mb{x}_j)(q(\mb{x}_j)-q(\mb{x}_i)) \\
		& = w_{ji}^\gamma (\mb{x}_j-\mb{x}_i) (q(\mb{x}_j)-q(\mb{x}_i)).
	\end{align}
	and the proof follows.
\end{proof}

\begin{lem} \label{lem_dM}
	\cite{stewart_book,cai2012average} Consider an $n$-by-$n$ matrix~$P(\alpha)$ as a function of~${\alpha \geq 0}$. Assume that this matrix has~${l<n}$ equal eigenvalues~$\lambda_1=\ldots=\lambda_l$ with (right and left) linearly independent unit eigenvectors~$\mb{v}_1,\ldots,\mb{v}_l$ and~$\mb{u}_1,\ldots,\mb{u}_l$.
	Then, $\frac{d\lambda_i}{d\alpha}|_{\alpha=0}$ is the $i$-th eigenvalue of the following matrix:
	\begin{align}
		\left(\begin{array}{ccc}
			\mb{u}_1^\top P' \mb{v}_1 & \ldots & \mb{u}_1^\top P' \mb{v}_l \\
			& \ddots & \\
			\mb{u}_l^\top P' \mb{v}_1 & \ldots & \mb{u}_l^\top P' \mb{v}_l
		\end{array} \right), ~ P' =  \frac{dP(\alpha)}{d\alpha}|_{\alpha=0}
	\end{align}
\end{lem}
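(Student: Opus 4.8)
The plan is to carry out first-order degenerate perturbation theory. Write $\lambda_0 := \lambda_1 = \cdots = \lambda_l$ for the repeated eigenvalue of $P(0)$ and collect the right and left eigenvectors as the columns of $V=[\mb{v}_1,\dots,\mb{v}_l]$ and $U=[\mb{u}_1,\dots,\mb{u}_l]$. First I would normalize them biorthogonally, $U^\top V = I_l$, which is always achievable for a semisimple eigenvalue whose eigenvectors are linearly independent; by hypothesis this gives $P(0)V=\lambda_0 V$ and $U^\top P(0)=\lambda_0 U^\top$.

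Next, for $\alpha$ near $0$ the eigenvalue $\lambda_0$ splits into $l$ differentiable branches $\lambda_i(\alpha)$ with $\lambda_i(0)=\lambda_0$, each carrying a unit eigenvector $\mb{w}_i(\alpha)$ solving $P(\alpha)\mb{w}_i(\alpha)=\lambda_i(\alpha)\mb{w}_i(\alpha)$. Since $\mb{w}_i(0)$ must lie in the degenerate eigenspace, I write $\mb{w}_i(0)=V\mb{c}_i$ for some $\mb{c}_i \in \mathbb{R}^l$. Differentiating the eigenvalue equation at $\alpha=0$ yields
\begin{align} \nonumber
P' \mb{w}_i(0) + P(0)\mb{w}_i'(0) = \lambda_i'(0)\mb{w}_i(0) + \lambda_0 \mb{w}_i'(0).
\end{align}
Left-multiplying by $U^\top$ and using $U^\top P(0)=\lambda_0 U^\top$, the two terms containing the unknown derivative $\mb{w}_i'(0)$ cancel, leaving $U^\top P' \mb{w}_i(0)=\lambda_i'(0)\,U^\top \mb{w}_i(0)$. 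Substituting $\mb{w}_i(0)=V\mb{c}_i$ and invoking $U^\top V=I_l$ collapses this to the reduced $l$-by-$l$ eigenproblem
\begin{align} \nonumber
(U^\top P' V)\,\mb{c}_i = \lambda_i'(0)\,\mb{c}_i.
\end{align}
The matrix $U^\top P' V$ has $(j,k)$-entry $\mb{u}_j^\top P' \mb{v}_k$, which is exactly the matrix displayed in the statement; hence each $\lambda_i'(0)$ is one of its eigenvalues, and letting $i$ range over the $l$ branches exhausts all $l$ of them.

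The step I expect to be the main obstacle is justifying that the branches $\lambda_i(\alpha)$ and the eigenvectors $\mb{w}_i(\alpha)$ are genuinely differentiable at the degenerate point $\alpha=0$, which cannot be taken for granted when eigenvalues coalesce. I would discharge this by appealing to the analytic perturbation theory of Rellich and Kato, as encapsulated in the cited \cite{stewart_book,cai2012average}: an analytic matrix family admits analytic eigenvalue and eigenvector branches through the crossing, and this theory forces the admissible zeroth-order eigenvectors $\mb{w}_i(0)=V\mb{c}_i$ to be precisely the eigenvectors of the reduced matrix $U^\top P' V$, which is exactly the self-consistency already encoded in the derivation. A secondary technical point worth flagging is the biorthonormalization $U^\top V=I_l$: without it one instead obtains the generalized eigenproblem $U^\top P' V \mb{c}_i = \lambda_i'(0)\,(U^\top V)\,\mb{c}_i$, so this normalization, implicit in the statement, is what makes the reduced matrix emerge in the clean form claimed.
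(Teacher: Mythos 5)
Your derivation is correct: this lemma is stated in the paper as a cited result from \cite{stewart_book,cai2012average} with no proof given, and what you have written is precisely the standard first-order degenerate (semisimple) perturbation argument that underlies those references. You correctly identify the two genuine technical points --- the biorthonormalization $U^\top V = I_l$ (which is achievable exactly because the eigenvalue is semisimple, so $U^\top V$ is invertible) and the differentiability of the eigenvalue branches at the coalescence point --- and you discharge both appropriately, so nothing further is needed.
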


\begin{lem} \label{lem_zeroeig}
	Given the compact system dynamics~\eqref{eq_xydot1} with $A(t,\alpha,\gamma)$ as Eq. \eqref{eq_M_g}, the system has all negative eigenvalues except $m$ zero eigenvalues for sufficiently small $\alpha$, with $m$ as the dimension of the ML optimization variable.
\end{lem}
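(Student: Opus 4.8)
The plan is to treat $A(t,\alpha,\gamma)$ as a perturbation, in the small parameter $\alpha$, of the block-triangular matrix obtained at $\alpha=0$, and to track how the eigenvalues leave the origin. Write $L:=\overline{W}_{\gamma,q}\otimes I_m$. At $\alpha=0$ the matrix is block lower-triangular with both diagonal blocks equal to $L$, so its spectrum is that of $L$ counted twice. Since $Q(t)$ is diagonal with entries in $[1-\tfrac{\rho}{2},1+\tfrac{\rho}{2}]$ by the sector bound \eqref{eq_qlog}, it is positive definite, and $\overline{W}_{\gamma,q}=\overline{W}_\gamma Q$ is similar to the symmetric matrix $Q^{1/2}\overline{W}_\gamma Q^{1/2}$ via $\overline{W}_\gamma Q=Q^{-1/2}(Q^{1/2}\overline{W}_\gamma Q^{1/2})Q^{1/2}$. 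By Sylvester's law of inertia and Lemma~\ref{lem_laplac}, this symmetric matrix has one zero eigenvalue and $n-1$ strictly negative ones; hence $L$ has an $m$-dimensional kernel and $m(n-1)$ negative eigenvalues, and $A(t,0,\gamma)$ has $2m$ zero and $2m(n-1)$ negative eigenvalues. I record that the right and left null vectors of $\overline{W}_{\gamma,q}$ are $Q^{-1}\mathbf{1}_n$ and $\mathbf{1}_n$, respectively, as these are needed below.

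Two of the three ingredients are then routine. First, the $2m(n-1)$ eigenvalues that are strictly negative at $\alpha=0$ remain strictly negative for small $\alpha$ by continuity of the spectrum. Second, I would show that exactly $m$ zero eigenvalues survive for every $\alpha>0$ by computing $\ker A(t,\alpha,\gamma)$ directly: from $A(t,\alpha,\gamma)(\mathbf{y};\mathbf{z})=0$ the first block row gives $\mathbf{z}=\tfrac1\alpha L\mathbf{y}$, and substituting into the second block row collapses the system to $L^2\mathbf{y}=0$. As $L$ is diagonalizable (being similar to a symmetric matrix), $L^2\mathbf{y}=0$ forces $L\mathbf{y}=0$, whence $\mathbf{y}\in\ker L$ and $\mathbf{z}=\mathbf{0}$. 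Thus $\ker A(t,\alpha,\gamma)=\{(\mathbf{y};\mathbf{0}):\mathbf{y}\in\ker L\}$ has dimension exactly $m$ for all $\alpha>0$, accounting for the $m$ persistent zero eigenvalues.

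The substance of the proof is to show that the \emph{remaining} $m$ zero eigenvalues present at $\alpha=0$ are pushed into the open left half-plane once $\alpha>0$. Here I would invoke Lemma~\ref{lem_dM} with $P'=\tfrac{dA}{d\alpha}$, whose only nonzero blocks are $-I_{mn}$ in the top-right and $-H$ in the bottom-right position, evaluated on the $2m$-dimensional zero eigenspace at $\alpha=0$. Choosing the right-eigenvector basis $\{(r\otimes e_k;\mathbf{0}),(\mathbf{0};r\otimes e_k)\}_{k=1}^m$ with $r=Q^{-1}\mathbf{1}_n$, one sees that $P'$ annihilates the first $m$ of them, so the $2m\times2m$ derivative matrix of Lemma~\ref{lem_dM} is block upper-triangular with a zero $m\times m$ diagonal block (reproducing the $m$ persistent zeros) and a second diagonal block $D$. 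After pairing with the bi-orthonormal left eigenvectors and simplifying, $D$ reduces to $-\tfrac1s\sum_{p=1}^n \tfrac1{\overline{q}_p}\nabla^2 g_p(\mathbf{y}_p)$ with $s=\mathbf{1}_n^\top Q^{-1}\mathbf{1}_n>0$, and its eigenvalues are the sought derivatives. It remains to prove $D\prec0$, i.e. that this weighted Hessian sum is positive definite. In the ideal (unquantized) case $Q=I_{mn}$ this is exactly Assumption~\ref{ass_cost}, since $\sum_p\nabla^2 g_p=(\mathbf{1}_n\otimes I_m)^\top H(\mathbf{1}_n\otimes I_m)\succ0$; for the quantized case the weights $\tfrac1{\overline{q}_p}$ lie within order $\rho$ of unity, so using $H\preceq L I_{mn}$ the perturbation of the sum is controlled and positive definiteness is preserved for $\rho$ small. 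This is the step I expect to be the main obstacle, both because of the bookkeeping needed to reduce the degenerate $2m\times2m$ perturbation matrix to this clean block form, and because it is precisely where the possible non-convexity of the $g_i$ (indefinite individual Hessians $\nabla^2 g_p$) must be absorbed by the aggregate positivity in Assumption~\ref{ass_cost}.

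Finally I would combine the counts: $m$ eigenvalues stay exactly at zero by the persistent-kernel argument, $m$ leave the origin with strictly negative first derivative and hence are negative for small $\alpha$, and $2m(n-1)$ remain negative by continuity, totalling $2mn$. The interplay is essential: the perturbation analysis alone leaves the $m$ zero-derivative eigenvalues ambiguous at first order, while the exact kernel computation pins them at zero, together yielding algebraic multiplicity exactly $m$ for the zero eigenvalue. Since the bounds on $Q$ and $H$ are uniform in $t$ and the topology set $\Gamma$ is finite, the threshold on $\alpha$ can be taken uniform in $t$ and $\gamma$, which is what the subsequent Lyapunov-based stability analysis requires.
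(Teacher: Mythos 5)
Your proof follows the same backbone as the paper's: split $A(t,\alpha,\gamma)$ into the block-triangular matrix at $\alpha=0$ plus the perturbation $\alpha A^1$, count the $2m$ zero eigenvalues of the unperturbed matrix via Lemma~\ref{lem_laplac}, and apply Lemma~\ref{lem_dM} together with Assumption~\ref{ass_cost} to show that $m$ of them acquire a strictly negative first derivative while $m$ persist. Where you diverge, you mostly sharpen the argument. You treat the quantized Laplacian $\overline{W}_{\gamma,q}=\overline{W}_\gamma Q$ honestly, via the similarity to $Q^{1/2}\overline{W}_\gamma Q^{1/2}$ and the corrected null vectors $Q^{-1}\mb{1}_n$ (right) and $\mb{1}_n$ (left), whereas the paper reuses the unquantized eigenvectors $\mb{1}_n$ in its Lemma~\ref{lem_dM} computation and only bounds the spectrum through the sector inequality \eqref{eq_qlog}. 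Your exact kernel computation ($\mb{z}=\tfrac{1}{\alpha}L\mb{y}$, then $L^2\mb{y}=0\Rightarrow L\mb{y}=0$ by diagonalizability) pins the zero eigenvalue at algebraic multiplicity exactly $m$ for every $\alpha>0$; the paper only establishes $\tfrac{d\lambda_{1,j}}{d\alpha}|_{\alpha=0}=0$, which by itself does not rule out drift at higher order, so this is a genuine strengthening. On the other hand, for the eigenvalues already negative at $\alpha=0$ you invoke continuity of the spectrum, which proves the lemma as stated ("sufficiently small $\alpha$") but does not yield the explicit threshold $\overline{\alpha}=|\lambda_2|/\bigl(L(1+\tfrac{\rho}{2})\bigr)$ that the paper extracts from the determinant factorization \eqref{eq_m=1} and then quotes in the hypothesis of Theorem~\ref{thm_converg}; if your proof is to support that theorem, the quantitative step cannot be skipped.

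The one soft spot is the step you yourself flag: reducing the nontrivial diagonal block to $-\tfrac{1}{s}\sum_{p}\overline{q}_p^{-1}\nabla^2 g_p(\mb{y}_p)$ and asserting its negative definiteness "for $\rho$ small." Assumption~\ref{ass_cost} gives $\sum_p\nabla^2 g_p\succ 0$ and only the one-sided bound $H\preceq L I_{mn}$; since the individual Hessians may be indefinite and are not bounded below by the assumption, the deviation $\sum_p(\overline{q}_p^{-1}-1)\nabla^2 g_p$ is not automatically of order $\rho$ in norm. You need a two-sided bound $\|\nabla^2 g_p\|\leq L$ (presumably the intended reading of $H\preceq LI_{mn}$, but it should be made explicit) or a strict positive lower bound on the aggregate to conclude $D\prec 0$. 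The paper sidesteps this only because it evaluates the perturbation matrix with the unquantized eigenvectors, so your version is the more careful one but leaves this single inequality to be justified before the argument closes.
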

\begin{proof}
	First, rewrite Eq. \eqref{eq_M_g} as
	\begin{align}  \label{eq_Mg}
		A(t,\alpha,\gamma) &=   A_q^0 + \alpha A^1, ~ A_q^0 = Q(t)  A^0 \\
		(1-\frac{\rho}{2}) A^0 & \preceq A_q^0 \preceq (1+\frac{\rho}{2}) A^0 \\ \label{eq_beta_M}
		(1-\frac{\rho}{2}) I_n  &\preceq Q(t) \preceq (1+\frac{\rho}{2}) I_n
	\end{align}
	where
	\begin{eqnarray}\nonumber
		A^0 &=&   \left(\begin{array}{cc} \overline{W}_\gamma \otimes I_m & \mb{0}_{mn\times mn} \\ H(\overline{W}_\gamma \otimes I_m) & \overline{W}_\gamma \otimes I_m \end{array} \right),\\\nonumber
		A^1 &=& \left(\begin{array}{cc} \mb{0}_{mn\times mn} & - {I_{mn}} \\ {\mb{0}_{mn\times mn}} & - H \end{array} \right).
	\end{eqnarray}
	One can write $ (1-\frac{\rho}{2}) \sigma(A^0) \leq \sigma(A^0_q) \leq (1+\frac{\rho}{2}) \sigma(A^0)$, where $\sigma(A^0) = \sigma(\overline{W} \otimes I_m) \cup \sigma(\overline{W} \otimes I_m)$. Then, from Lemma~\ref{lem_laplac}, $A^0$ has~$m$ set of eigenvalues which satisfy
	$$\lambda_{2n,j} \leq \ldots \leq \lambda_{3,j} < \lambda_{2,j} = \lambda_{1,j} = 0,~j=\{1,\ldots,m\}.$$
	Then, using Lemma~\ref{lem_dM}, one can check how the perturbation~$\alpha A^1$ affects $\lambda_{1,j}$ and~$\lambda_{2,j}$ in $\sigma(A_q^0)$.
	%Let~$\lambda_{1,j}(\alpha,t)$ and~$\lambda_{2,j}(\alpha,t)$ denote  the perturbed eigenvalues by~$\alpha M^1$ and follows \eqref{eq_spect_k}.
	Following Lemma~\ref{lem_dM},
	\begin{align} \nonumber
		\mb{v} = [\mb{v}_1~\mb{v}_2] =\left(\begin{array}{cc}
			\mb{1}_n& \mb{0}_n \\
			\mb{0}_n & \mb{1}_n
		\end{array} \right)\otimes I_m,
		% \frac{1}{\sqrt{m}}
	\end{align}
	and $\mb{u}=\mb{v}^\top$. From \eqref{eq_M_g}, we have~$\frac{dA(\alpha)}{d\alpha}|_{\alpha=0}=A^1$. Then,
	\begin{eqnarray} \label{eq_dmalpha}
		\mb{v}^\top A^1 \mb{v}= \left(\begin{array}{cc}
			\mb{0}_{m\times m} & \mb{0}_{m\times m} \\
			... & -(\mb{1}_n \otimes I_m)^\top H (\mb{1}_n \otimes I_m)
		\end{array} \right),
	\end{eqnarray}
	which has $m$ zero eigenvalues and $m$ other eigenvalues are negative. This follows Assumption~\ref{ass_cost} since
	\begin{equation} \label{eq_sum_df}
		-(\mb{1}_n \otimes I_m)^\top H  ( \mb{1}_n \otimes I_m)= -\sum_{i=1}^n \nabla^2  f_i(\mb{x}_i) \prec 0,
	\end{equation}
	Therefore,~${\frac{d\lambda_{1,j}}{d\alpha}|_{\alpha=0} = 0}$ and~${\frac{d\lambda_{2,j}}{d\alpha}}|_{\alpha=0}<0$. This implies that after perturbation of $A^0_q$ by~$\alpha A^1$, $m$ zero eigenvalues~$\lambda_{2,j}(\alpha,t)$ become negative while  $\lambda_{1,j}(\alpha,t)$ remain zero for all $j$. Further, we need to show that for sufficiently small $\alpha$ other negative eigenvalues still remain in the left-half-plane.
	First, we recall from \cite[Appendix]{delay_est} to relate the spectrum of $A(t,\alpha,\gamma)$ in \eqref{eq_Mg} to $\alpha$. For simplicity of the proof analysis, from this point onward we set $m=1$, but the proof holds for any $m>1$. By proper row/column permutations, following \cite[Eq.~(18)]{delay_est},  $\sigma(A(t,\alpha,\gamma))$ can be defined as, 
		\begin{align} \nonumber
			\mbox{det}(\alpha  I_{n}) \mbox{det}(H(\overline{W}_\gamma) +(\overline{W}_\gamma - \alpha H -\lambda I_{n}) (\frac{1}{\alpha})(\overline{W}_\gamma  -\lambda I_{n})) = 0.
		\end{align} 
		This can be simplified as follows,
		\begin{align} \label{eq_m=1}
			\mbox{det}(I_{n}) \mbox{det}((\overline{W}_\gamma  -\lambda I_{n})(\overline{W}_\gamma  - \lambda I_{n}) +\alpha \lambda H ) = 0
		\end{align}
		For stability/convergence, we need to find the admissible $\alpha$ values such that $\lambda$ remains in the left-half-plane, except for one zero eigenvalue\footnote{The eigenspectrum analysis in this section follows the fact that the eigenvalues are continuous functions of the matrix elements~\cite{stewart_book}.}. 
		Clearly, $\alpha=0$ satisfies \eqref{eq_m=1} and leads to the eigen-spectrum following as $\sigma(A(t,\alpha,\gamma)) = \sigma(\overline{W}_\gamma) \cup \sigma(\overline{W}_\gamma)$ which has two zero roots for all switching topologies $\gamma$. The other root $\overline{\alpha}>0$ needs to be defined for the admissible range $0<\alpha<\overline{\alpha}$ for the stability of $A(t,\alpha,\gamma)$. 
		For any $\lambda<0$, with some abuse of notation, Eq. \eqref{eq_m=1} can be reformulated as
		\begin{align} \nonumber
			\mbox{det}((\overline{W}_\gamma  -\lambda I_{n} \pm \sqrt{\alpha |\lambda| H})(\overline{W}_\gamma  - \lambda I_{n} \mp \sqrt{\alpha |\lambda| H} ) ) = 0 \label{eq_det_all}
		\end{align}
		Then, we have,
		\begin{align} \nonumber
			\mbox{det}(\overline{W}_\gamma  - \lambda I_{n} \mp \sqrt{\alpha |\lambda| H} )= \mbox{det}(\overline{W}_\gamma  -\lambda (I_{n}   \mp \sqrt{\frac{\alpha  H}{|\lambda|}} )) = 0
		\end{align}
		This gives $\lambda (1 \pm \sqrt{\frac{\alpha  H}{|\lambda|}})$ as perturbed eigenvalue of $\lambda \in \sigma(\overline{W}_\gamma)$. Therefore, for $\lambda \neq 0$, the minimum $\overline{\alpha}$ making this term zero is,
		\begin{align}
			\overline{\alpha} = \argmin_{\alpha} |1 - \sqrt{\frac{\alpha  H}{|\lambda|}}|
			\geq \frac{\min \{|\lambda|\neq 0\}}{\max \{H_{ii}\}} = \frac{|\lambda_2|}{L}
		\end{align}
		where $H \preceq L I_{n}$ is recalled from Assumption~\ref{ass_cost}.
		Therefore, the admissible range for the convergence of the linear dynamics~\eqref{eq_sol_lin}-\eqref{eq_ydot_lin} is
		\begin{align} \label{eq_alphabar0}
			0 < \alpha < \overline{\alpha}:= \frac{|\lambda_2|}{L}
		\end{align}
		Considering the log-quantization nonlinearity in dynamics~\eqref{eq_sol_q}-\eqref{eq_ydot_q} and from Eq.~\eqref{eq_qlog}, we have
		\begin{align} \label{eq_alphabar00}
			0 < \alpha < \overline{\alpha}:= \frac{|\lambda_2|}{L(1+\frac{\rho}{2})}
		\end{align}
		This gives the admissible range of $\alpha$ for which all the eigenvalues are in the left-half-plane (except one set of $m$ zero eigenvalues). This completes the proof.
\end{proof}

\begin{theorem} \label{thm_converg}
	(\textbf{convergence})
	Suppose Assumptions~\ref{ass_cost}-\ref{ass_net} hold. Then, under initial conditions $\sum_{i=1}^{n} \mb{x}_i(0) = b$, $\mb{z}(0) = \mb{0}_{nm}$ and sufficiently small $\alpha$ satisfying $0 < \alpha < \overline{\alpha}:= \frac{|\lambda_2|}{L(1+\frac{\rho}{2})}$, Algorithm~\ref{alg_1} solves the problem \eqref{eq_opt}.
\end{theorem}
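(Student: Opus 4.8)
The plan is to exploit the structural decoupling of Algorithm~\ref{alg_1}: the scheduling flow \eqref{eq_sol_q} depends only on $\mb{x}$ (through $f_i$ and the penalty $f^\Xi_i$), while the learning flow \eqref{eq_xdot_q}--\eqref{eq_ydot_q} evolves $(\mb{y},\mb{z})$ independently, so the two halves can be treated separately and then recombined. The equality constraint $\sum_{i=1}^n \mb{x}_i=b$ needs no further work: Lemma~\ref{lem_feas} guarantees it holds for all $t$, so it suffices to steer $\mb{x}$ to the optimizer \emph{within} this feasible affine set, and Lemma~\ref{lem_z*} identifies that point as the feasible $\mb{x}^*$ with $\nabla_\mb{x} F(\mb{x}^*)\in\mbox{span}(\mb{1}_n)$, together with $\mb{y}^*\in\mbox{span}(\mb{1}_n)$ and $\mb{z}^*=\mb{0}$.

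For the scheduling part I would take the Lyapunov candidate $V_x=F(\mb{x})-F(\mb{x}^*)\ge0$, which is coercive and bounded below by strict convexity of $f_i$ (Assumption~\ref{ass_cost}) plus the penalty, so trajectories stay bounded. Writing $a_i=\partial_{\mb{x}_i}f_i+\partial_{\mb{x}_i}f^\Xi_i$, the chain rule gives $\dot V_x=\sum_i a_i\,\dot{\mb{x}}_i=\sum_i a_i\sum_j w_{ij}^\gamma\big(q(a_j)-q(a_i)\big)$, and applying Lemma~\ref{lem_sum} with $\mb{x}_i\mapsto a_i$ collapses this into a sum of terms $-\tfrac{1}{2}w_{ij}^\gamma(a_i-a_j)\big(q(a_i)-q(a_j)\big)$. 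Since $q$ is sign-preserving and non-decreasing, each product $(a_i-a_j)\big(q(a_i)-q(a_j)\big)\ge0$, so $\dot V_x\le0$, vanishing only when the $a_i$ agree across every edge up to the quantization resolution, while the sector bound \eqref{eq_qlog} keeps the argument non-degenerate. Connectivity (Assumption~\ref{ass_net}) then forces $\nabla_\mb{x} F\in\mbox{span}(\mb{1}_n)$ on the invariant set, so LaSalle's invariance principle drives $\mb{x}$ to the feasible point of Lemma~\ref{lem_z*}, which strict convexity renders unique; this is $\mb{x}^*$.

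For the learning part I would lean on Lemma~\ref{lem_zeroeig}: for every frozen $(t,\gamma)$ and every admissible quantizer matrix $Q(t)$ in the sector \eqref{eq_beta_M}, the matrix $A(t,\alpha,\gamma)$ of \eqref{eq_M_g} has exactly $m$ zero eigenvalues and the rest strictly in the left half-plane whenever $0<\alpha<\overline{\alpha}$. The $m$-dimensional kernel is precisely the target manifold $\{\mb{y}\in\mbox{span}(\mb{1}_n),\,\mb{z}=\mb{0}\}$, and the gradient-tracking invariant \eqref{eq_sumxdot2}, $\sum_i\dot{\mb{y}}_i=-\alpha\sum_i\nabla g_i(\mb{y}_i)$, pins the consensus value to the stationary point $\sum_i\nabla g_i(\mb{y}^*)=\mb{0}$; the aggregate condition \eqref{eq_H} of Assumption~\ref{ass_cost} guarantees this stationary point is the minimizer of $\sum_i g_i$ along the consensus direction. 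I would then split $(\mb{y},\mb{z})$ into this invariant component, shown to converge to $(\mb{y}^*,\mb{0})$ by the tracking identity, and a disagreement component, shown to decay via a quadratic Lyapunov function $V_{yz}$ built from the stable block of $A$.

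The main obstacle is making this last step rigorous while the system is simultaneously time-varying, switching in $\gamma$, and nonlinear through the quantizer, since frozen-time Hurwitzness from Lemma~\ref{lem_zeroeig} does not by itself yield convergence of the switched flow. The crux is to produce a \emph{common} $P\succ0$ with $A(t,\alpha,\gamma)^\top P+P A(t,\alpha,\gamma)\prec0$ on the disagreement subspace uniformly over all $\gamma\in\Gamma$ and all $Q(t)$ obeying \eqref{eq_beta_M}. This is delicate because $H=\mbox{diag}[\nabla^2 g_i(\mb{y}_i)]$ is itself time-varying and the $g_i$ are only required to satisfy the aggregate condition \eqref{eq_H} rather than elementwise convexity; the argument must use that all $\overline{W}_{\gamma,q}$ share the same zero-row-sum structure and identical kernel $\mbox{span}(\mb{1}_n)$, so the perturbation bound \eqref{eq_alphabar00} holds uniformly and a single Lyapunov certificate survives the switching. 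Once this uniform negativity on the disagreement subspace is secured, combining the two subsystem conclusions shows that Algorithm~\ref{alg_1} drives $(\mb{x},\mb{y},\mb{z})\to(\mb{x}^*,\mb{y}^*,\mb{0})$, i.e., it solves \eqref{eq_opt}.
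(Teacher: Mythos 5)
Your scheduling half is essentially the paper's own argument: the paper also works with $\delta_c=F(\mb{x})-F(\mb{x}^*)$, computes $\dot\delta_c=\nabla_\mb{x}F\,\dot{\mb{x}}$, collapses it via Lemma~\ref{lem_sum} into $-\sum_{i,j}\frac{w_{ij}^\gamma}{2}(a_i-a_j)\left(q(a_i)-q(a_j)\right)\le 0$ using the oddness and monotonicity of $q$, and concludes $\mb{x}\to\mb{x}^*$ on the feasible affine set guaranteed by Lemma~\ref{lem_feas}. Your LaSalle/connectivity step making the limit set explicit and invoking strict convexity for uniqueness is a welcome refinement of that part.

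The learning half, however, is left open. You correctly observe that frozen-time eigenvalue placement from Lemma~\ref{lem_zeroeig} does not by itself give convergence of a switched, time-varying, quantized flow, and you reduce the matter to producing a common $P\succ0$ that is uniformly negative on the disagreement subspace over all $\gamma\in\Gamma$ and all $Q(t)$ in the sector \eqref{eq_beta_M} --- but you never produce such a $P$, so the proof does not close. This is precisely the step the paper supplies (or at least asserts): it takes the certificate to be the identity, i.e., it uses the single combined Lyapunov function $V=\tfrac12\lVert\delta_m\rVert_2^2+\delta_c$ with $\delta_m=(\mb{y};\mb{z})-(\mb{y}^*;\mb{0}_{nm})$, and bounds the quadratic form directly by $\delta_m^\top A\,\delta_m\le\lambda_2\,\delta_m^\top\delta_m$ with $\lambda_2<0$ from Lemma~\ref{lem_zeroeig}, citing \cite{SensNets:Olfati04}. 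To finish along your route you must either justify that $P=I_{2mn}$ works --- which is a statement about the symmetric part of $A(t,\alpha,\gamma)$ restricted to the disagreement subspace, not about its spectrum, and hence does not follow from Lemma~\ref{lem_dM}-type eigenvalue perturbation alone --- or explicitly construct a $P$ valid simultaneously for every $\overline W_{\gamma,q}$ and every admissible $H$. Without one of these, the learning-subsystem conclusion, and therefore the theorem, does not follow from what you have written.
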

\begin{proof}
	We need to show that the algorithm converges to the optimal point given by Lemma~\ref{lem_z*}. We prove this by defining the proper residual-based Lyapunov function.
	Define the state residual vector for the ML objective as
	\begin{align} \label{eq_delta}
		\delta_m = \left(\begin{array}{c} {\mb{y}} \\ {\mb{z}} \end{array} \right) - \left(\begin{array}{c} {\mb{y}^*} \\ {\mb{0}_{nm}} \end{array} \right)
	\end{align}
	and the residual for scheduling as
	\begin{align} \label{eq_delta2}
		\delta_c = F(\mb{x}) - F(\mb{x}^*)
	\end{align}
	For Lyapunov analysis, we consider the following function,
	\begin{align} \label{eq_V}
		V =   \frac{1}{2}\lVert \delta_m \rVert_2^2 + \delta_c
	\end{align}
	This function is positive-definite with $V \rightarrow 0$ as $\delta \rightarrow \mb{0}$. We have
	\begin{align} \label{eq_Vdot}
		\dot{V} = \delta_m^\top \dot{\delta}_m + \dot{\delta}_c =  \delta_m^\top A {\delta_m}+  \nabla_\mb{x} F \dot{\mb{x}}.
	\end{align}
	Then,  from~\cite[Sections~VIII-IX]{SensNets:Olfati04}, for the first term we get
	\begin{eqnarray} \label{eq_Re2}
		\delta_m^\top A {\delta_m} \leq \lambda_{2} \delta_m^\top {\delta_m},
	\end{eqnarray}
	From Lemma~\ref{lem_zeroeig} we know that ${\lambda}_{2}$ is negative, which implies that this term in $\dot{\mc{V}}$ is negative-definite for $\delta_m \neq \mb{0}$.
	
	For the second term in \eqref{eq_Vdot} we have
	\begin{align}
		\nabla_\mb{x} F \dot{\mb{x}} &= \sum_{i=1}^n  \dot{\mb{x}}_i (\partial_{\mb{x}_i} f_i + \partial_{\mb{x}_i} f^\Xi_i). \end{align}
	Replacing $\dot{\mb{x}}_i$ from Eq.~\eqref{eq_sol_q}, we obtain
	
	\small \begin{align} \nonumber
		\nabla_\mb{x} F \dot{\mb{x}} = \sum_{i =1}^n &(\partial_{\mb{x}_i} f_i + \partial_{\mb{x}_i} f^\Xi_i) \\ &\sum_{j=1}^{n} w_{ij}^\gamma \Big(q(\partial_{\mb{x}_j} f_j + \partial_{\mb{x}_j} f^\Xi_j) -  q(\partial_{\mb{x}_i} f_i + \partial_{\mb{x}_i} f^\Xi_i)\Big).
	\end{align} \normalsize
	Using Lemma~\ref{lem_sum} and Assumption~\ref{ass_net}, we see that
	\begin{align} \nonumber
		\nabla_\mb{x} F \dot{\mb{x}} =  -\sum_{i,j =1}^n \frac{w_{ij}^\gamma}{2} \Bigl( (\partial_{\mb{x}_i} f_i + \partial_{\mb{x}_i} f^\Xi_i) -(\partial_{\mb{x}_j} f_j + \partial_{\mb{x}_j} f^\Xi_j)\Big) \\ \Bigl( q(\partial_{\mb{x}_i} f_i + \partial_{\mb{x}_i} f^\Xi_i) -q(\partial_{\mb{x}_j} f_j + \partial_{\mb{x}_j} f^\Xi_j)\Big)\Bigr).
	\end{align}
	Recall that log-scale quantization is odd and monotonically non-decreasing. This implies that $\nabla_\mb{x} F \dot{\mb{x}} <0$ for $\delta_c \neq \mb{0}$ and $\dot{V}$ is negative-definite. 
	Then, recalling Lyapunov stability theorem \cite{nonlin}, the residual $\delta_m+\delta_c$ under the dynamics~\eqref{eq_sol_q}-\eqref{eq_ydot_q} is decreasing toward zero and, therefore, $F(\mb{x}) \rightarrow F(\mb{x}^*)$ (and $\mb{x} \rightarrow \mb{x}^*$), $\mb{y} \rightarrow \mb{y}^* $, and $\mb{z} \rightarrow \mb{0}_{nm}$.
	This completes the proof.
\end{proof}

	It should be noted that the proof of Lemma~\ref{lem_zeroeig} and Theorem~\ref{thm_converg} follows from Assumption~\ref{ass_cost} and is only based on the constraint $(\mb{1}_n \otimes I_m)^\top H (\mb{1}_n \otimes I_m) \succ 0$ with no other assumption on the convexity of the $g_i(\cdot)$ function. Therefore, the convergence holds for possibly non-convex $g_i(\cdot)$ satisfying Assumption~\ref{ass_cost}. This is shown later via simulations in Section~\ref{sec_sim1}.

\begin{rem}
		The proof analysis for convergence and feasibility in Lemma~\ref{lem_feas},~\ref{lem_zeroeig}, and Theorem~\ref{thm_converg} is irrespective of the structure of the network topology and only requires the network to be connected. Therefore, the network could be time-varying, and the proofs of Lemma~\ref{lem_feas},~\ref{lem_zeroeig}, and Theorem~\ref{thm_converg} still hold. This is in contrast to \cite{grammenos2023cpu,rikos2021optimal}, which consider time-invariant (static) network topology.
\end{rem}

\begin{rem}
		Theorem~\ref{thm_converg} clearly proves convergence subject to log-scale quantization on data-sharing considered in the nodes' dynamics \eqref{eq_sol_q}-\eqref{eq_ydot_q}. This is in contrast to uniform quantization, which in general may result in some optimality gap (or steady-state residual) which is often proportional to the quantization level, e.g., see existing literature on distributed learning/optimization with \textit{uniformly quantized} data \cite{rikos2021optimal,zhu2016quantized,danaee2021energy,hanna2021quantization,bastianello2023online}. This is because log-scale quantization is a sector-bound nonlinearity, while uniform quantization is not sector-bounded (with some bias at the origin), see Fig.~\ref{fig_quant}. In fact, logarithmic quantization assigns more bits to represent smaller values and fewer bits to larger values; this is advantageous for gradient-tracking as smaller gradient values near the optimal point (and more
		critical for convergence) are represented with higher precision. This may increase the complexity of the distributed optimization setup, but reduces the optimality gap.
\end{rem}

\begin{figure}[]
	\centering
	\includegraphics[width=4.3in]{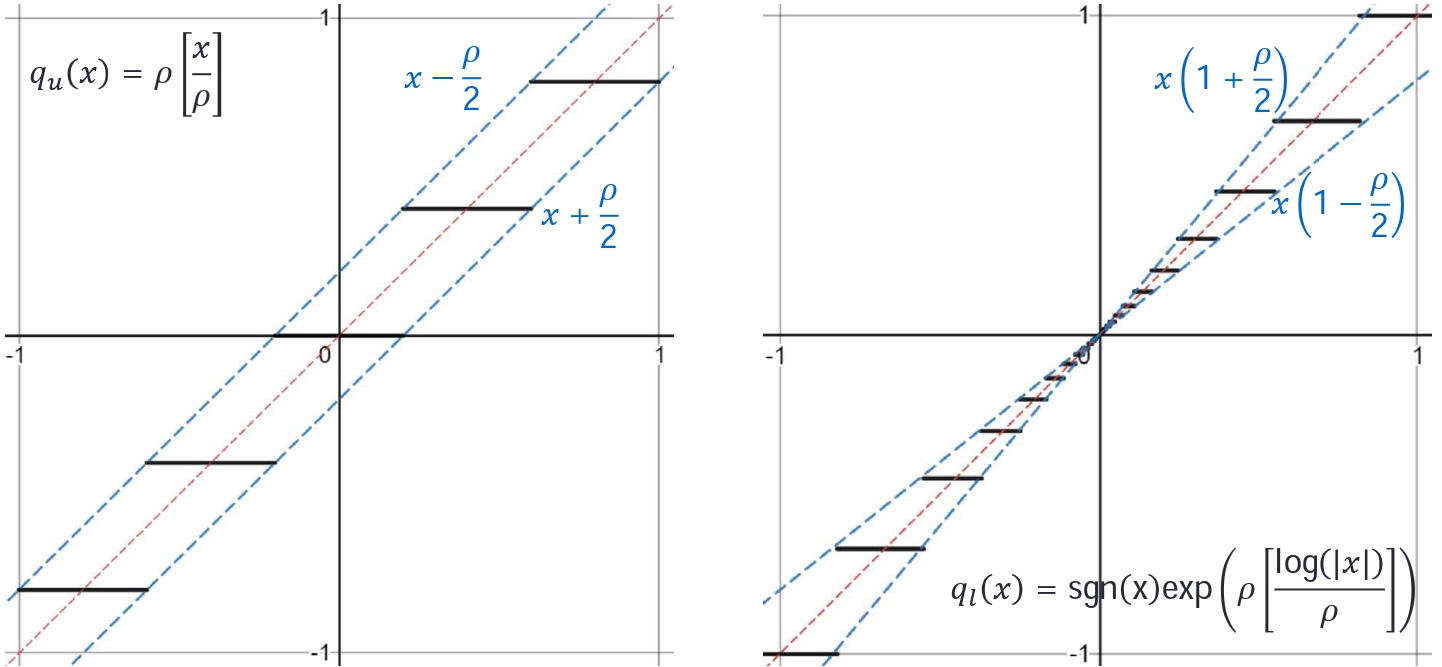} 
	\caption{Comparison between uniform quantization (left) and log-scale quantization (right): uniform quantization is not a sector-bound nonlinearity, while logarithmic quantization is sector-bounded, assigning more bits to represent smaller values and fewer bits to larger values. }
	\label{fig_quant}
\end{figure}

\section{Simulations}\label{sec_sim}
The simulations are performed in an Intel Core-i5 laptop with $2.50$ GHz CPU and $8$ GB RAM. The simulations in Section~\ref{sec_sim1} are performed in MATLAB R$2022$, and simulations in Section~\ref{sec_sim2} are performed in Python.  
\subsection{Academic Examples}\label{sec_sim1}
In contrast to traditional centralized computing, in this section, we apply parallel data processing of data over many computing nodes while cost-optimally assigning the computing resources.
For simulation, we consider a network of $n=20$ computing nodes communicating over a time-varying Erdos-Renyi graph topology with $40\%$ linking probability. The link weights $w_{ij}^\gamma$ are randomly set in the range $(0,1]$. First, we compare the optimal cost of the proposed CPU scheduling algorithm with the CPU balancing strategy in \cite{rikos2021optimal} in Table~\ref{tab_comp}. As stated in Remark~\ref{rem_cpu}, considering our sum-preserving constraint results in less resource allocation cost as compared to the consensus-based balancing strategy in \cite{rikos2021optimal}.

\begin{table} [h]
	\centering
	\caption{Comparing the optimality performance with uniformly-quantized algorithm  \cite{rikos2021optimal} in terms of the cost of assigned CPU resources for different values of overall demand $b$ }
	\label{tab_comp}
	\begin{tabular}{|c|c|c|c|c|}
		\hline
		The demand & $b = 1000$  & $b = 3000$ & $b = 5000$ & $b = 8600$ \\
		\hline
		This work & 84 & 288 & 338 & 417   \\
		\hline
		Ref. \cite{rikos2021optimal} & 1387 & 2015 & 2386 &  2453 \\
		\hline
		\hline
	\end{tabular}
\end{table}

For the rest of this section, we set overall resources $b=8600$ to meet the workload demand and the box constraints as $0<\mb{x}_i<700$ (i.e., $\underline{\zeta}=0$, $\overline{\zeta}=700$ in Algorithm~\ref{alg_1}).
We use these computing resources to solve two ML problems as follows.

\textbf{Distributed SVM:} Consider $N=1000$ data points in 2D as shown in Fig.~\ref{fig_data} to be classified by the SVM line shown in the figure.
\begin{figure}[]
	\centering
	\includegraphics[width=2.5in]{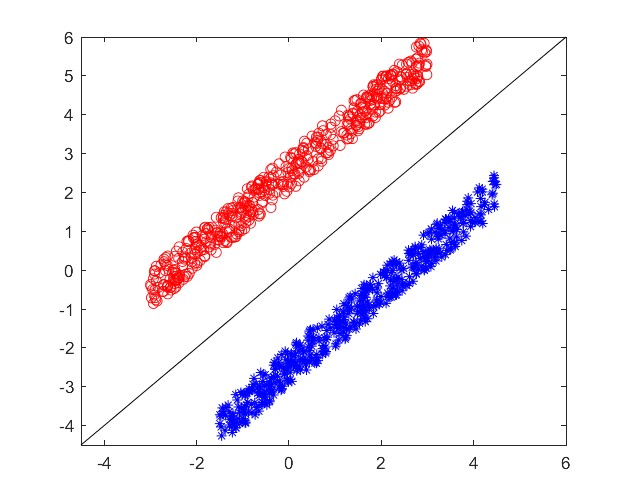} 
	\caption{The data points in 2D used for the SVM classification and the associated SVM classifier line.}
	\label{fig_data}
\end{figure}
The data is distributed among the computing nodes such that every node has access to $75\%$ of the data points. The parameters in Algorithm~\ref{alg_1} are set as $\alpha=0.05$, $\mu = 2$, $C=5$, $\sigma=2$, $\epsilon = 1$, and $\rho = 0.125$. Every computing node minimizes its SVM loss function described in \eqref{eq_svm_dist} and CPU resource allocation cost \eqref{eq:fiz2} to find the optimal variables $\mb{x}_i$, $\omega_i$, and $\nu_i$. The time-evolution of the parameter $\mb{x}_i$ denoting the assigned CPU resources, the SVM classifier parameters $\omega_i \in \mathbb{R}^2,\nu_i \in \mathbb{R}$, and the overall optimality gap (or cost residual) are shown in Fig.~\ref{fig_svm}. As it is clear from the figure, the average value of the assigned resources $\mb{x}_i$ is constant over time, implying resource-demand feasibility at all times. The SVM parameters (both dimensions of vector $\omega_i$ and scalar $\nu_i$) at all nodes reach consensus. The overall residual function $\sum_{i=1}^{n} (f_i(\mb{x}_i) + f_i^\Xi(\mb{x}_i)+ g_i(\mb{y}_i)) - (f_i(\mb{x}^*_i) +g_i(\mb{y}^*_i))$ is decreasing over time; however, the convergence rate is slow since the data is distributed \textit{heterogeneously} among the computing nodes (see Remark~\ref{rem_data}) and only a portion of data is available at each computing node.
\begin{figure}[]
	\centering
	\includegraphics[width=2.5in]{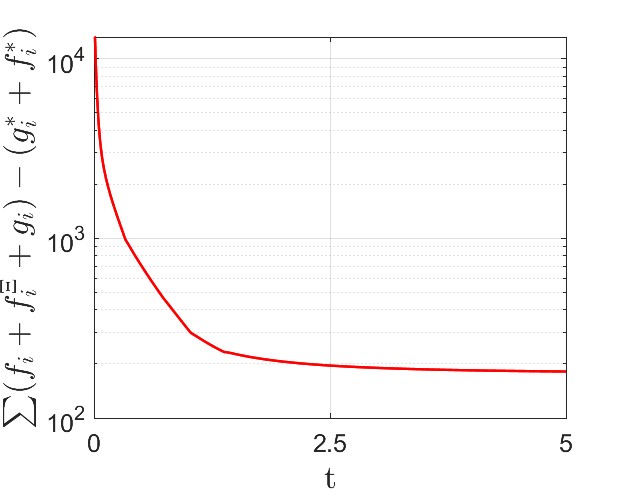}
	\includegraphics[width=2.5in]{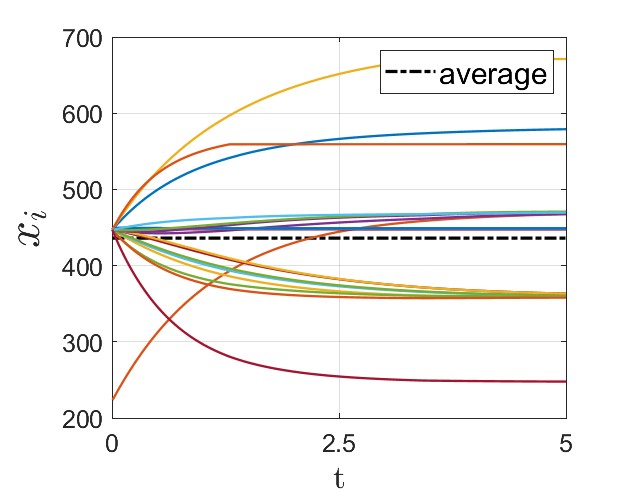}
	\includegraphics[width=2.5in]{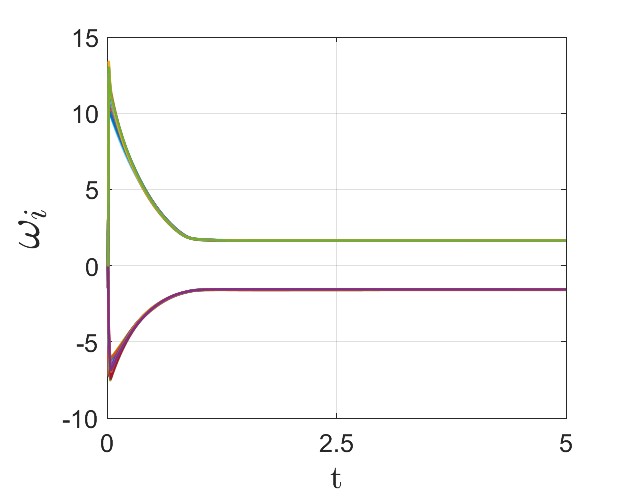} 
	\includegraphics[width=2.5in]{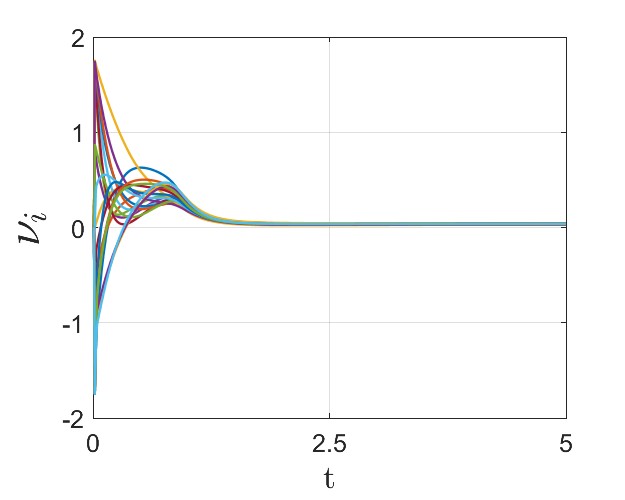} 
	\caption{Time-evolution of the overall cost residual (optimality gap), assigned CPU resources $\mb{x}_i$, and SVM parameters ${\omega}_i,\nu_i$ under the proposed Algorithm~\ref{alg_1}.}
	\label{fig_svm}
\end{figure}

\textbf{Distributed linear regression:}
We consider a set of $N=1000$ randomly generated 2D data points to be fitted with a regressor line. Each computing node has access to $70\%$ of the data points and locally finds the parameters of the regressor line; then, it shares this information with the neighbouring nodes over the network to reach a consensus on the regression parameters.  The parameters in Algorithm~\ref{alg_1} are set as $\alpha=0.1$, $\epsilon=1$,  $\sigma=2$, and $\rho = 0.0625$. Every computing node minimizes its regression loss function described in \eqref{eq_lr_dist} and CPU resource allocation cost \eqref{eq:fiz2} to find the optimal variables $\mb{x}_i$, $\omega_i$, and $\nu_i$. The time-evolution of the CPU parameter $\mb{x}_i$, the regression line parameters $\omega_i ,\nu_i$, and the cost residual are shown in Fig.~\ref{fig_reg}. Clearly, the residual is decreasing toward the optimal point, resource-demand feasibility of CPU resources $\mb{x}_i$ holds at all times, and the regressor line parameters reach consensus at all nodes.
\begin{figure}[]
	\centering
	\includegraphics[width=2.5in]{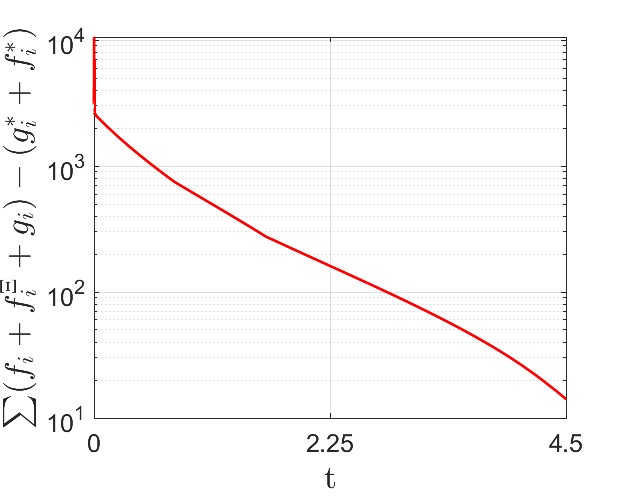}
	\includegraphics[width=2.5in]{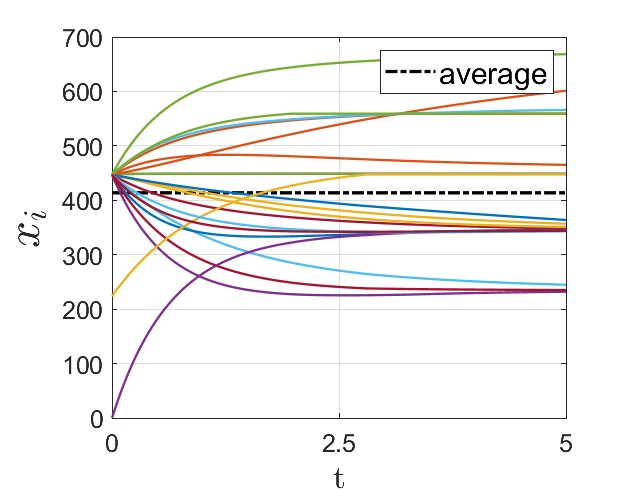}
	\includegraphics[width=2.5in]{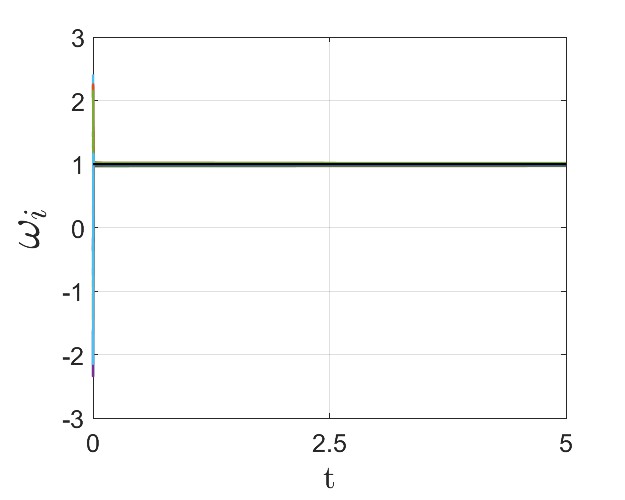} 
	\includegraphics[width=2.5in]{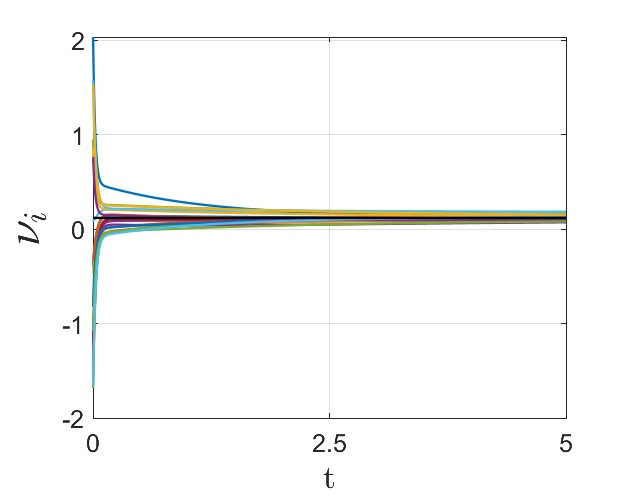} 
	\caption{Time-evolution of the overall cost residual (optimality gap), assigned CPU resources $\mb{x}_i$, and regression parameters $\beta_i,\nu_i$ under the proposed Algorithm~\ref{alg_1}.}
	\label{fig_reg}
\end{figure}

	\textbf{Non-convex objective:} 
	Next, the simulation is performed for a synthetic non-convex local objective function (taken from \cite{xin2021fast}) defined as:
	\begin{align}\label{eq_fij_sim}
		g_{i,j}(y_i) = 2 y_i^2 +3\sin^2(y_i)+a_{i,j} \cos(y_i) + b_{i,j}y_i,
	\end{align} 
	with $\sum_{i=1}^n \sum_{j=1}^N a_{i,j} = 0$ and $\sum_{i=1}^n \sum_{j=1}^N b_{i,j}=0$ such that $a_{i,j},b_{i,j} \neq 0$ and randomly chosen in the range $(-5,5)$ with $N=60$ sample data points. As illustrated in Fig.~\ref{fig_nonconv}, these local objectives are not convex, i.e., $\nabla^2 f_{i}(\mb{x})$ is negative at some points, while the global objective satisfies Assumption~\ref{ass_cost}. 
	\begin{figure}
		\centering
		\includegraphics[width=1.75in]{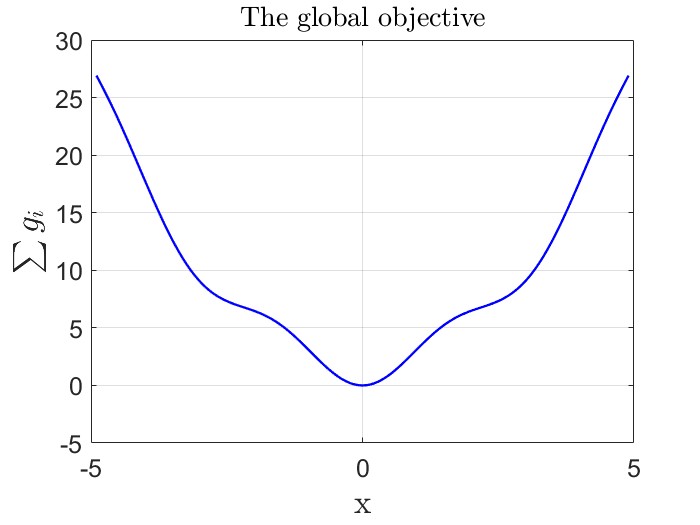}
		\includegraphics[width=1.75in]{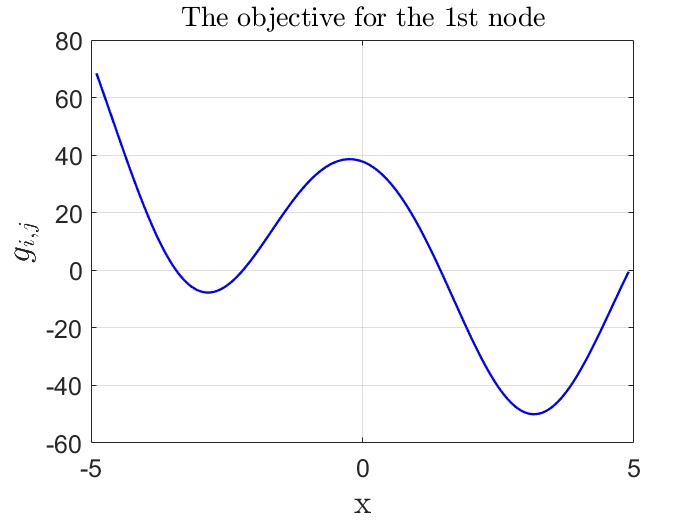}
		\includegraphics[width=1.75in]{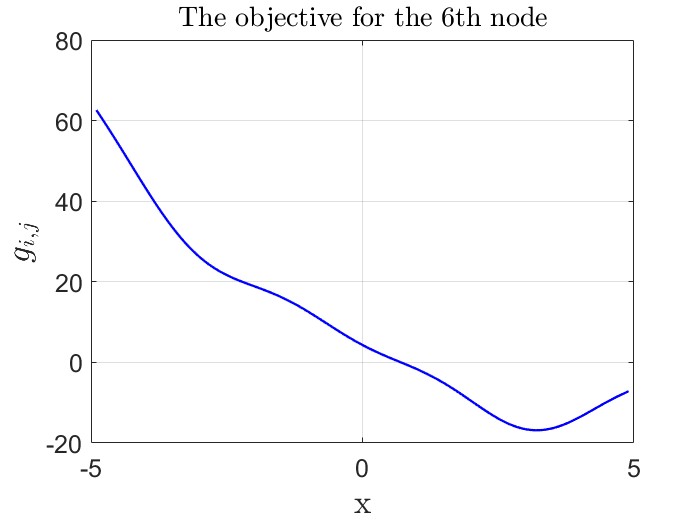}
		\includegraphics[width=1.75in]{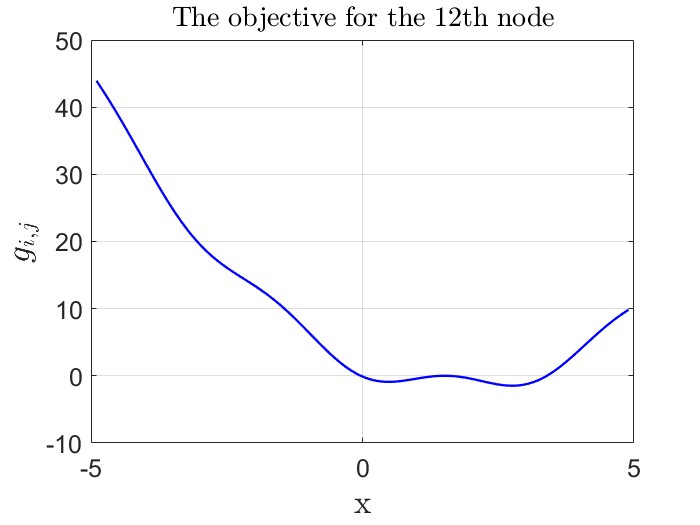} 
		\includegraphics[width=1.75in]{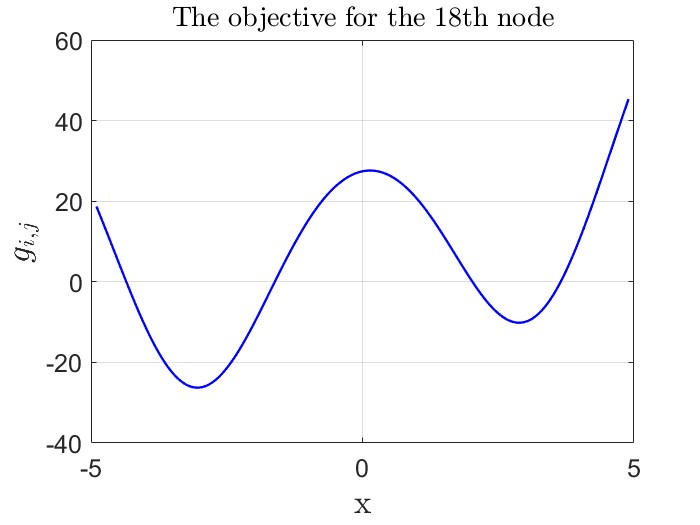}   
		\caption{The top-left figure shows the global objective function $ \frac{1}{n}\sum_{i=1}^{n} \frac{1}{m}\sum_{j=1}^{N} g_{i,j}(y_i)$. The other four figures show the local non-convex objective functions at four sample nodes. This shows an example non-convex local objective function $g_i(\cdot)$ satisfying Assumption~\ref{ass_cost}.}  \label{fig_nonconv}
	\end{figure}
	We set the gradient-tracking parameter as $\alpha = 0.1$ and the log-scale parameter as $\rho = 0.03125,0.125$. The CPU scheduling parameters are $\epsilon=2$,  $\sigma=2$. The simulation result is shown in Fig.~\ref{fig_nonconv2}.
	\begin{figure}
		\centering
		\includegraphics[width=2.5in]{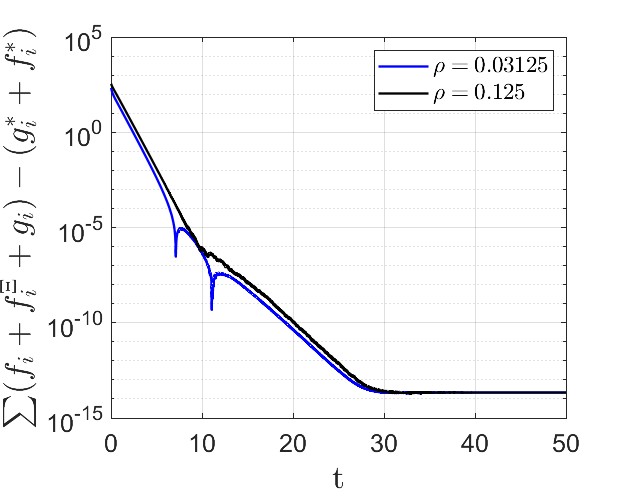}  
		\includegraphics[width=2.5in]{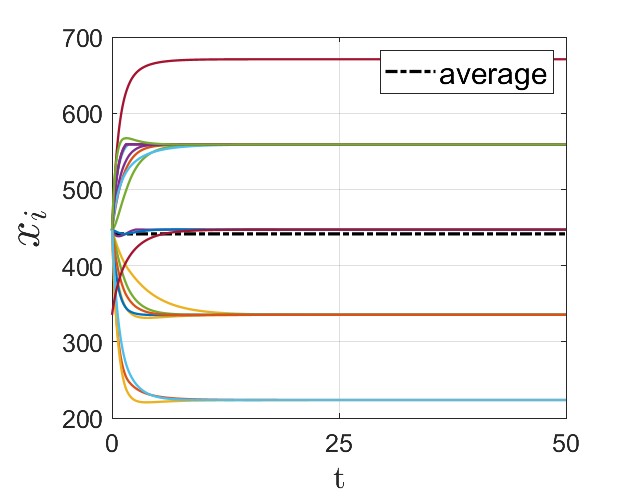}  
		\caption{These figures show the time-evolution of the  locally non-convex cost-function~\eqref{eq_fij_sim} plus CPU costs (left) and assigned CPU resources (right)  subject to log-scale quantization. }  \label{fig_nonconv2}
	\end{figure}

\subsection{Real Data-Set Example} \label{sec_sim2}
Next, we consider the MNIST image data set and compare the performance of our ML setup with some existing literature for image classification.
We use $N = 12000$ labelled images from this data set and classify them using logistic regression with a convex regularizer over a network of $n=16$ nodes.  
	The objective optimization model is defined as 
	\begin{align}
		\min_{\mb{b},c} &
		F(\mb{b},c) = \frac{1}{n}\sum_{i=1}^{n} f_i,
	\end{align}  
	with every node $i$ accessing a batch of $m_i=\frac{N}{n}=750$ sample images. Then, every node locally minimizes the following training loss function:
	\begin{align}\label{eq_fij_regression}
		f_i(\mb{x}) = \frac{1}{m_i}\sum_{j=1}^{m_i} \ln(1+\exp(-(\mb{b}^\top \mc{X}_{i,j}+c)\mc{Y}_{i,j}))+\frac{\theta}{2}\|\mb{b}\|_2^2,
	\end{align}
	which is smooth because of the addition of the regularizer $\theta$. The $j$-th sample at node $i$ is defined as a tuple ${\mc{X}_{i,j},\mc{Y}_{i,j}} \subseteq \mathbb{R}^{784}
	\times \{+1, -1\}$ and $\mb{b},c$ are the regression objective parameters to be optimized. This training objective model is convex and satisfies Assumption~\ref{ass_cost} for finite number of bounded-value data points. The network of computing nodes is considered as an \textit{exponential} graph, where, following Remark~\ref{rem_data}, this type of network structure gives a lower optimality gap and faster convergence. Some existing (non-quantized) algorithms are used for comparison: \textit{GP} \cite{nedic2014distributed}, \textit{SGP} \cite{spiridonoff2020robust}, \textit{S-ADDOPT} \cite{qureshi2020s}, \textit{ADDOPT} \cite{xi2017add}, and \textit{PushSAGA} \cite{qureshi2021push}. For all algorithms, the step sizes are hand-tuned to achieve the best performance.
Fig.~\ref{fig_compare} shows the comparison results on the MNIST data set. Clearly, the proposed log-quantized algorithm (with $\alpha = 4.1$ and $\rho = 0.0625$) shows good performance as compared to these works.

\begin{figure}[]
	\centering
	\includegraphics[width=2.75in]{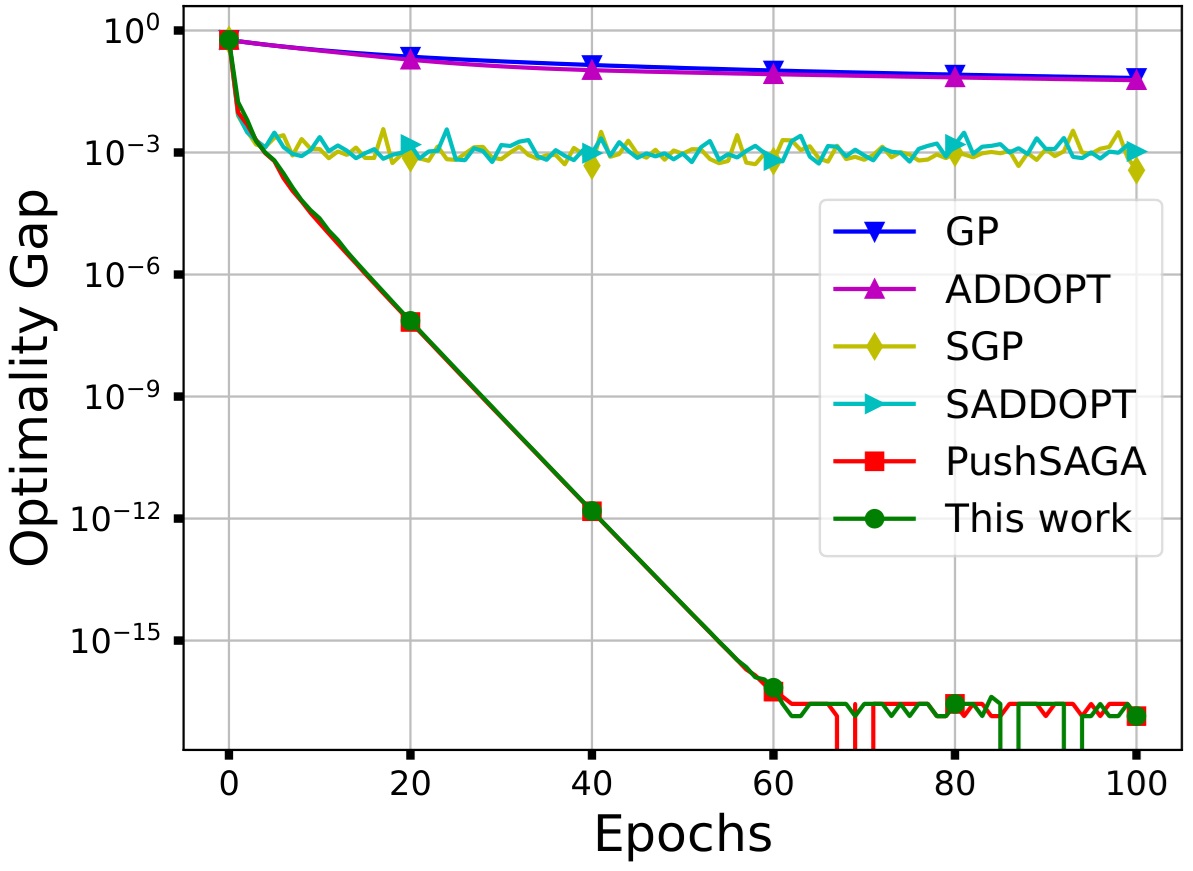} 
	\caption{The performance comparison of the proposed ML optimization algorithm on the MNIST dataset with some existing literature in terms of optimality gap (cost residual) and rate of convergence.}
	\label{fig_compare}
\end{figure}

\section{Conclusions}\label{sec_con}
\subsection{Concluding Remarks}
This work provides a distributed co-optimization algorithm for CPU scheduling and different ML applications over dynamic networks. {One advantage is that the algorithm optimally assigns the computing resources with all-time resource-demand feasibility and can be terminated at any time depending on the termination criteria. Moreover, the proof is based on eigenspectrum perturbation analysis and the Lyapunov stability theorem, which allows convergence over time-varying networks. On the other hand, although Assumption~\ref{ass_net} assumes undirected networks, the optimization of the ML objective alone may converge over weight-balanced directed networks. A limitation is solving the general co-optimization objective \eqref{eq_opt} over directed networks, which is a direction of our current research.   
Both simulation results and mathematical analysis show the optimal convergence and all-time feasibility of the algorithm.

\subsection{Future Directions}
One direction of our future research is to relax Assumption 1 to more general non-convex objective models such that the algorithm can be applied to more complex models, e.g., deep neural networks.
As other future research directions, the results can be used for CPU scheduling and computing resource management in different distributed optimization, learning, and filtering applications. For example, distributed optimization and filtering over large transportation networks, computing resource allocation for detection and estimation over large-scale CPS and IoT environments, and resource scheduling for multi-target tracking over large-scale sensor networks. Considering node failure or loss of communications by adding redundancy mechanisms in the network of computing nodes is another interesting research direction.

%\section*{Acknowledgement}
%Authors would like to thank Usman A. Khan and Muhammad~I.~Qureshi for sharing their codes for MNIST data set classification and for their insightful comments. This work has been supported by the Center for International Scientific Studies \& Collaborations (CISSC), Ministry of Science Research and Technology of Iran.

\bibliographystyle{elsarticle-num}
\bibliography{bibliography}

\end{document}